\let\oldmarginpar\marginpar
\renewcommand\marginpar[1]{\-\oldmarginpar[\raggedleft\footnotesize #1]%
{\raggedright\footnotesize #1}}
\newcommand{\nat}{\mathbb{N}}
\def\imod#1{\allowbreak\mkern5mu({\operator@font mod}\,\,#1)}
\renewcommand{\emph}{\textit}
\newtheorem{theorem}{Theorem}
\newtheorem{lemma}{Lemma}
\newtheorem{proposition}{Proposition}
\newtheorem{definition}{Definition}
 \newtheorem{assumption}{Assumption}
 \title{Symbolic Planning and Control Using Game Theory and
   Grammatical Inference}
 \author{Jie~Fu,~\IEEEmembership{Student~Member,~IEEE,} Herbert~
   G.~Tanner,~\IEEEmembership{Senior~Member,~IEEE,} Jeffrey~Heinz,
   Jane~Chandlee,
   Konstantinos~Karydis,~\IEEEmembership{Student~Member,~IEEE}, and
   Cesar~Koirala \thanks{Jie Fu, Herbert G. Tanner and Konstantinos
     Karydis are with the Mechanical Engineering Department at the
     University of Delaware, Newark DE 19716.
     \texttt{\{jiefu,kkaryd,btanner\}@udel.edu}.}  \thanks{Jeffrey
     Heinz, Jane Chandlee and Cesar Koirala are with the Department of
     Linguistics and Cognitive Science at the University of Delaware,
     Newark DE 19716. \texttt{\{heinz,janemc,koirala\}@udel.edu}.}
   \thanks{This work is supported by NSF award \#1035577.
   The authors thank Calin Belta and his group for joint technical
discussions through which the case study game example was conceived.
Thanks are also extended to Jim Rogers for his insightful comments.
}}
\begin{document}

\acrodef{dfa}[DFA]{deterministic finite state automaton}
\acrodef{fsas}[FSAs]{finite state automata} \acrodef{fsa}[FSA]{finite
  state automaton} \acrodef{sa}[SA]{semiautomaton}
\acrodef{tsl}[TSL]{Tier-based Strictly Local}
\acrodef{mso}[MSO]{second-order monadic logic of words}
\acrodef{re}[regex]{Regular expression} \acrodef{fsa}[FSA]{finite
  state automaton} 
  \acrodef{gim}[GIM]{grammatical inference machine}
\acrodef{sel}[\textsc{SEL}]{string extension learning}
\acrodef{sl}[SL]{strictly local} \acrodef{sp}[SP]{strictly piecewise}
\acrodef{pt}[PT]{prefix tree}
\acrodef{sef}[SEF]{string extension function}
\acrodef{ltl}[LTL]{Linear Temporal Logic}

\newcommand{\rmnum}[1]{\romannumeral #1}
\newcommand{\Rmnum}[1]{\expandafter\@slowromancap\romannumeral #1@}
\makeatother

\maketitle

\begin{abstract}
  This paper presents an approach that brings together game theory
  with grammatical inference and discrete abstractions in order to
  synthesize control strategies for hybrid dynamical systems
  performing tasks in partially unknown but rule-governed adversarial
  environments.  The combined formulation guarantees that a system
  specification is met if
\begin{inparaenum}[(a)]
\item the true model of the environment is in the class of models
  inferable from a positive presentation,  \item a
  characteristic sample is observed,   and \item the task
    specification is satisfiable given the capabilities of the system
    (agent) and the environment.
\end{inparaenum} 
\end{abstract}
\begin{IEEEkeywords}
Hybrid systems, automata, language learning, infinite games.
\end{IEEEkeywords}

\section{Introduction}
\subsection{Overview}

This paper demonstrates how a particular method of machine learning can be
incorporated into hybrid system planning and control, to enable systems to accomplish complex
tasks in \emph{unknown} and \emph{adversarial} environments.
This is achieved by bringing together formal abstraction methods for
hybrid systems, grammatical inference and (infinite) game theory.

 Many, particularly commercially available, automation systems 
come with control user interfaces that involve continuous low-to-mid level
controllers, which are either specialized for the particular application,
or are designed with certain ease-of-use, safety, or performance specifications in mind.  
This paper proposes a control synthesis method that works 
with---rather than in lieu of---existing control loops.  The focus here is on how
to abstract the given low-level control loops \cite{Tanner2012}
and the environment they operate in \cite{Fainekos2009},
and combine simple closed loop behaviors in an orchestrated temporal sequence.
The goal is to do so in a way that guarantees
the satisfaction of a task specification and is provably implementable
at the level of these low-level control and actuation loops.

As a field of study, grammatical inference is primarily concerned with
developing algorithms that are guaranteed to learn how to identify 
any member of a collection of formal objects  (such as languages or graphs) from a
presentation of examples and/or non-examples of that object, provided
certain conditions are met \cite{delaHiguera2010}. The conditions are
typical in learning research: the data presentation must be adequate,
the objects in the class must be reachable by the generalizations
the algorithms make, and there is often a trade-off between the two.

Here, grammatical inference is integrated into planning and control
synthesis using game theory.  Game theory is a natural framework for
reactive planning of a system in a dynamic environment
\cite{Zielonka1998135}. A task specification becomes a winning
condition, and the controller takes the form of a strategy that
indicates which actions the system (player 1) needs to take so that
the specification is met regardless of what happens in its environment
(player 2) \cite{Ramadge1987,game-Julia97}.  It turns out that
interesting motion planning and control problems can be formulated at
a discrete level as a variant of reachability games \cite{Gradel2002},
in which a \emph{memoryless} winning strategy can be computed for one
of the players, given the initial setting of the game.

In the formulation we consider, the rules of the game are assumed to
be initially unknown to the system; the latter is supposed
to operate in a potentially adversarial environment with unknown
dynamics. The application of grammatical inference algorithms to the
observations collected by the system during the course of the game
enables it to construct and incrementally update a model of this
environment. Once the system has learned the true nature of the game,
and if it is possible for it to win in this game, then it \emph{will}
indeed find a winning strategy, no matter how effectively the
adversarial environment might try to prevent it from doing so. In
other words, the proposed framework guarantees the satisfaction of the
task specification in the face of uncertainty, provided certain
conditions are met.  If those conditions are not met, then the system
is no worse off than when not using grammatical inference algorithms.

\subsection{Related work}

So far, symbolic planning and control methods address problems
where the environment is either static and presumably known, or
satisfies given assumptions \cite{Piterman06synthesisof,Belta2007,Lahijanian2010}.

In cases where the environment is static and known, we see
applications of formal methods like model checking
\cite{Belta2007,Laviers11}.  In other variants of this formulations,
reactive control synthesis is used to tackle cases where system
behavior needs to be re-planned based on information obtained from the
environment in real time \cite{Piterman06synthesisof}.  In
\cite{Lahijanian2010} a control strategy is synthesized for maximizing
the probability of completing the goal given actuation errors and
noisy measurements from the environment.  Methods for ensuring that
the system exhibits correct behavior even when there is the mismatch
between the actual environment and its assumed model are proposed in
\cite{Wongpiromsarn2010}.

\ac{ltl} plays an important role in existing approaches to 
symbolic planning and control.   It is being used to capture
\emph{safety}, \emph{liveness} and \emph{reachability} specifications
\cite{Tomlin2003}.  A formulation of \ac{ltl} games on graphs is
used in \cite{Kloetzer2008} to synthesize
control strategies for non-deterministic transition systems. 
Assuming an uncertain system model,
\cite{Wongpiromsarn2010} combines temporal logic
control synthesis with receding horizon control concepts.  
Centralized control designs for groups of robots tasked with
satisfying a \ac{ltl}-formula specification are found in \cite{Hadas2009}, 
under the assumption that the environment in which the robots
operate in adheres to certain conditions.  These methods are
extended \cite{Hadas2011} to enable the plan to
be revised during execution.  

Outside of the hybrid system's area, adjusting unknown system
parameters has traditionally been done by employing 
adaptive control or machine learning methods.
Established adaptive control techniques operate in a purely continuous
state regime, and most impose stringent conditions (e.g., linearity) on the
system dynamics; for these reasons they are not covered in the context of this limited scope
review---the interested reader is referred to \cite{Astrom,sastry-adaptive}.
On the other hand, machine learning is arguably a broader field.
A significant portion of existing
work is based on \emph{reinforcement learning}, which has
been applied to a variety of problems such as multi-agent control
\cite{maja97}, humanoid robots \cite{schaal03}, varying-terrain
wheeled robot navigation \cite{roy09}, and unmanned aerial vehicle
control \cite{abbeel10}.  The use of grammatical inference as a 
sub-field of machine learning in the context of robotics and control
is not entirely new; an example is the application of a 
\ac{gim} in robotic self-assembly \cite{Hamdi-Cherif2009}.

In the aforementioned formulations there is no consideration for
dynamic adversarial environments.  A notable exception is the work of
\cite{Yushan2012}, which is developed in parallel to, and in part
independently from, the one in this paper.  The idea of combining
learning with hybrid system control synthesis is a natural common
theme since both methods originate from the same joint sponsored
research project.  Yet, the two approaches are distinct in how they
highlight different aspects of the problem of synthesis in the
presence of dynamic uncertainty.  In \cite{Yushan2012}, the learning
module generates a model for a stochastic environment in the form of a
Markov Decision Process and control synthesis is performed using model
checking tools.  In this paper, the environment is deterministic, but
intelligently adversarial and with full knowledge of the system's
capabilities. In addition, the control synthesis here utilizes tools
from the theory of games on infinite words.

\subsection{Approach and contributions}

This paper introduces a symbolic control synthesis method based on the
architecture of Fig.~\ref{fig:hybrid-learning}, where a \ac{gim} is
incorporated into planning and control algorithms of a hybrid system
(a robot, in Fig.~\ref{fig:hybrid-learning}) to identify the dynamics
of an evolving but rule-governed environment.  The system---its
boundaries outlined with a thick line---interacts with its environment
through sensors and actuators.  Both the system as well as its
environment are dynamical systems (shown as ovals), assumed to admit
discrete abstractions in the form of transition systems (dashed
rectangles).  The system is required to meet a certain specification.
Given its specification ($\mathcal{A}_s$), an abstraction of itself ($A_1$), and
its hypothesis of the dynamics of its environment ($A_2$), the system
devises a plan and implements it utilizing a finite set of low-level
concrete control loops involving sensory feedback.  Using this sensory
information, the system refines its discrete environment model based
on a \ac{gim}, which is guaranteed to identify the environment
dynamics asymptotically.   Figure~\ref{fig:learning} gives a general description of the
implementation of learning and symbolic planning at the high-level of
the architecture in Fig.~\ref{fig:hybrid-learning}. The hypothesis on
the environment dynamics is at the center of the system's planning
algorithm. Through interactions with the environment, the system
observes the discrete evolution $\phi(i)$ of the environment dynamics,
and uses the \ac{gim} to construct and update a hypothesized
environment model $A_2^{(i)}$. Based on the environment model, the
system constructs a hypothesis (model) $\mathcal{G}^{(i)}$ capturing
how the ``game'' between itself and the environment is played, and
uses this model to devise a winning strategy (control law)
$\mathsf{WS}_1^\ast$. As the environment model converges
asymptotically to the true dynamics $A_2$, the winning strategy
becomes increasingly more effective.  In the limit, the system is
guaranteed to win the game.

\begin{figure}[t]
  \centering \subfigure[An overview of the architecture]{
  \label{fig:hybrid-learning}
\includegraphics[width=3in]{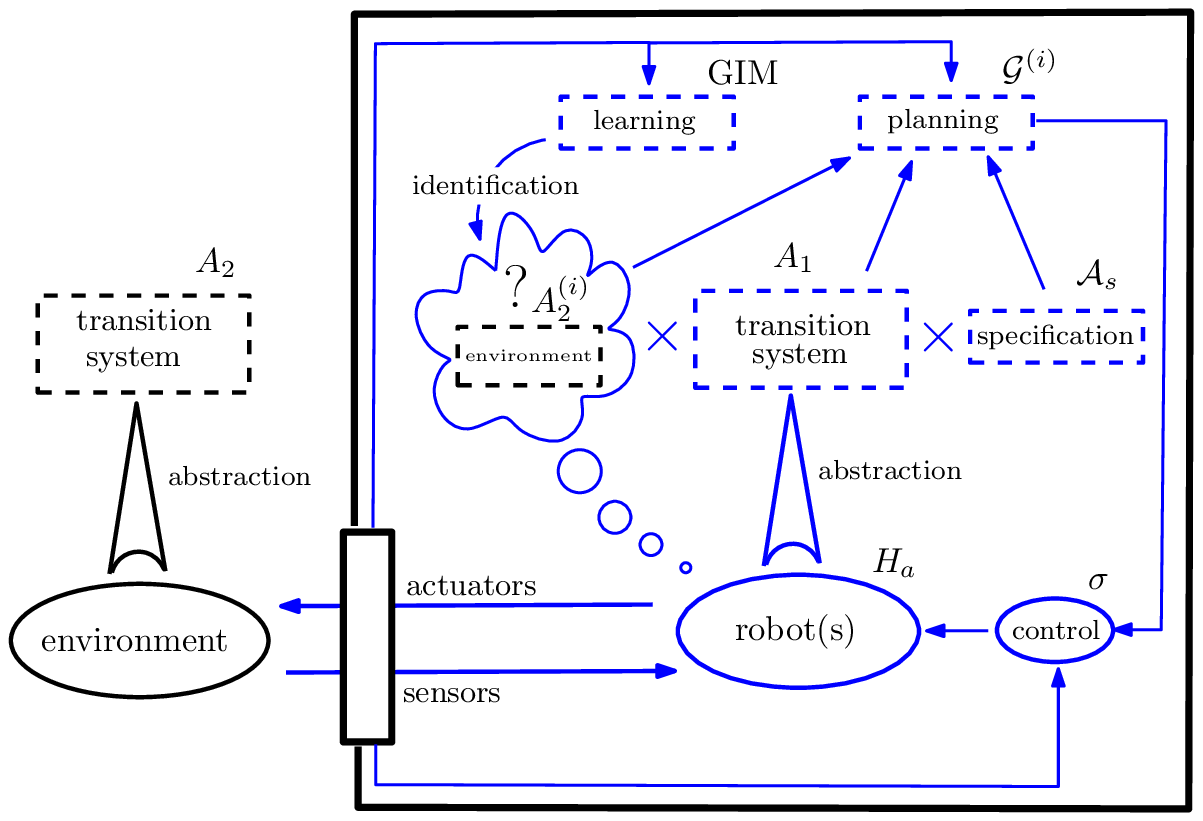}}
\subfigure[Learning and planning with grammatical inference module
      at the higher level.]{\label{fig:learning}
    \includegraphics[width=3in]{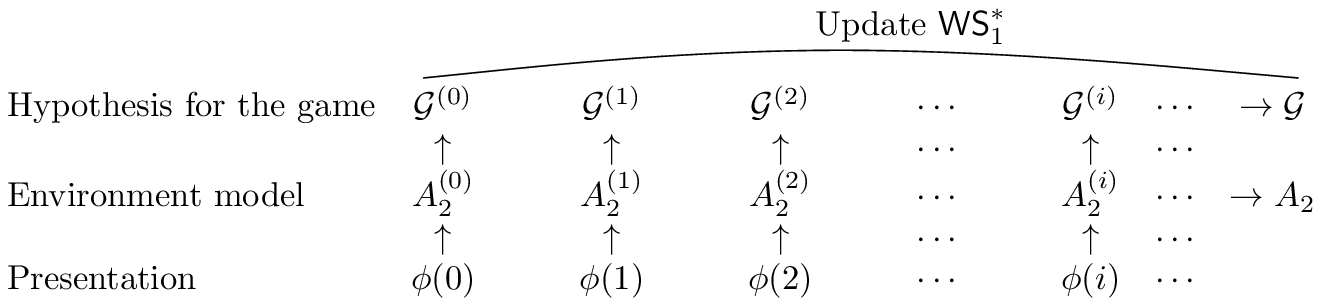}
}
\caption{The architecture of hybrid agentic planning and control with a
    module for grammatical inference.}
\end{figure}

Definitions~\ref{def:turnbased-prod}, \ref{def:gamefsa} and
Theorem~\ref{thm:simulation} establish how a game can be constructed
from the system abstractions of the (hybrid) system dynamics ($A_1$),
the environmental dynamics ($A_2$), and the task specification
($\mathcal{A}_s$). Theorem~\ref{thm:win} proves that the hybrid agent
can determine whether a winning strategy exists, and if it does, what
it is.  Grammatical inference methods yield
increasingly accurate models of environmental dynamics (assuming
adequate data presentations and reachable targets), and permit the
system to converge to an accurate model of its environment. Discrete
backward reachability calculations can be executed in a
straightforward manner and can allow the determination of winning
strategies (symbolic control laws), whenever the latter exist.

The contribution of this paper is two-fold:
\begin{inparaenum}[(i)]
\item it integrates \ac{gim}s into hybrid systems for the purpose of
  identifying the discrete dynamics of the environment that evolve and
  possibly interact with the system, and
\item it uses the theory of games on infinite words for symbolic control synthesis,
and discrete abstractions which ensure implementation of the symbolic
plans on the concrete hybrid system.
\end{inparaenum}
In the paper, both elements are combined, but each element has merit
even in isolation.  A hybrid system equipped with \ac{gim} is still
compatible with existing symbolic control synthesis methods (including
model checking).  On the other hand, the abstractions methods we
utilize here---although requiring strong properties on the continuous
components dynamics of the hybrid system---offer discrete abstract
models which are weakly simulated by the concrete systems,
irrespectively of whether the latter include a \ac{gim} or not.


\subsection{Organization}
 The rest of this paper is organized as follows.  Section
\ref{section:preliminaries} introduces the technical background,
the notation, and the models used.  The type of hybrid systems
considered and their discrete abstractions are presented there.
In Section \ref{section:analysis}, we show how the control
problem can be formulated as a game and employ the concept of
the attractor in games 
for control synthesis. Section \ref{section:GI} describes first how
a \ac{gim} can be used to identify asymptotically the dynamics of
the system's unknown and adversarial environment, and then how
this knowledge can be utilized in planning and control synthesis. 
In Section \ref{refine}, we establish the properties of the
relation between the hybrid system and its discrete abstraction, 
which ensure that the strategy devised based on the discrete model
is implementable on the concrete system.  Section 
\ref{section:example} illustrates the whole approach through an
example robotic application. In Section
\ref{section:conclusion} we discuss possible extensions of the 
proposed methodology and compare our grammatical inference  
to other learning methods. 

\section{Technical Preliminaries}
\label{section:preliminaries}

\subsection{Languages and Grammatical Inference}
\label{section:grinf-pre}

Let $\Sigma$ denote a fixed, finite alphabet, and $\Sigma^n$,
$\Sigma^{\leq n}$, $\Sigma^*$, $\Sigma^\omega$ be sequences over this
alphabet of length $n$, of length less than or equal to $n$, of any
finite length, and of infinite length, respectively.  The \emph{empty
  string} is denoted $\lambda$, and the \emph{length of string} $w$ is
denoted $|w|$.  A \emph{language} $L$ is a subset of $\Sigma^*$. A
string $u$ is a prefix (suffix) of a string $w$ if and only if there
exists a string $v$ such that $w=uv$ ($w=vu$). A prefix (suffix) of
length $k$ of a string $w$ is denoted $\mathsf{Pr}^{=k}(w)$
\big(respectively, $\mathsf{Sf}^{=k}(w)$\big) and a set of prefixes
(suffixes) of a string $w$ of length $\le k$ is denoted as
$\mathsf{Pr}^{\le k}(w)$ \big(respectively, $\mathsf{Sf}^{\le
  k}(w)$\big). For $w=\sigma_1\sigma_2\cdots\sigma_n\in\Sigma^*$, the
\emph{shuffle ideal} of $w$ is defined as
$\mathsf{SI}(w):=\Sigma^*\sigma_1\Sigma^*\sigma_2\cdots
\Sigma^*\sigma_n\Sigma^*$.  A string $u$ is a \emph{factor} of string
$w$ iff $\exists x,y\in\Sigma^*$ such that $w=xuy$. If in addition
$|u|=k$, then $u$ is a \emph{$k$-factor} of $w$.  If $E$ is a set,
$2^E$ denotes the set of all subsets and $2_\mathsf{fin}^{E}$ the set
of all finite subsets of $E$. A \ac{sef} is a total function,
$\mathfrak{f} : \Sigma^\ast \rightarrow 2_\mathsf{fin}^E$.  The
\emph{k-factor} function $\mathfrak{f}_k:\Sigma^*\rightarrow
2_\mathsf{fin}^{\Sigma^{\leq k}}$ maps a word to the set of
$k$-factors within it. If $|w|\le k$, $\mathfrak{f}_k(w):=\{w\}$,
otherwise $\mathfrak{f}_k(w):=\{u \mid u\mbox{ is a }k\mbox{-factor of
}w\}$. This function is extended to languages as
$\mathfrak{f}_k(L):=\bigcup_{w\in L} \mathfrak{f}_k(w)$.

A \ac{sa} is a tuple $A= \langle Q,\Sigma,T \rangle$ where $Q$ is the
set of states, $\Sigma$ is the set of alphabet and the transition
function is $ T: Q\times \Sigma \rightarrow Q$.  The elements of
$\Sigma$ are referred to as \emph{actions} and are thought to initiate
transitions at a given state according to $T$.  If $T(q_1,\sigma)=q_2$
(also written as $q_1\stackrel{\sigma}{\rightarrow} q_2$) with
$\sigma \in \Sigma$, then we say that $A$ \emph{takes action} $\sigma$
on $q_1$ and moves to $q_2$.  The transition function is expanded
recursively in the usual way. Note by definition, these \ac{sa}s are
deterministic in transition. For a (semi)automaton $A$, we define the
set-valued function $\Gamma:Q\rightarrow 2^\Sigma$ as
$\Gamma(q):=\{\sigma\in\Sigma\mid T(q,\sigma) \mbox{ is defined}\}$.
A \ac{fsa} is a tuple $\mathcal{A}= \langle A, I, F \rangle$ where
$A=\langle Q,\Sigma,T \rangle$ is a semiautomaton and $I,F\subseteq Q$
are the initial and final states, respectively.  The language of a
\ac{fsa} is $L(\mathcal{A}):=\{w\mid T(I,w)\cap F\neq\emptyset
\}$. For a regular language $L$, deterministic \ac{fsa}s recognizing
$L$ with the fewest states are called
\emph{canonical}.

For concreteness, let grammars of languages be constructed as the set
of possible Turing machines $\mathfrak{G}$. (Other kinds of grammars
are used later, but they are translatable into Turing machines.) The
language of a particular grammar $\mathfrak{G}$ is
$L(\mathfrak{G})$. A \emph{positive presentation} $\phi$ of a language
$L$ is a total function $\phi :\mathbb{N}\rightarrow L \cup\{\#\}$
($\#$ is a `pause'\footnote{Pause $\#$ can be understood as ``non
  data.''}) such that for every $w\in L$, there exists $n \in
\mathbb{N}$ such that $\phi(n)=w$.  With a small abuse of notation, a
presentation $\phi$ can also be understood as an infinite sequence
$\phi(1)\phi(2)\cdots$ containing every element of $L$.

Let $\phi[i]$ denote the initial finite sequence
$\phi(1)\phi(2)\ldots\phi(i)$.  Let $\mathfrak{Seq}$ denote the set of
all finitely long initial portions of all possible presentations of
all possible languages (i.e., all $\phi[i]$ for all $i\in\mathbb{N}$
and for all $L$). The \emph{content} of $\phi[i]$, written
$\textsf{content}(\phi[i])$, is the set of the elements of the
sequence, less the pauses. A \emph{learner} (\emph{learning algorithm,
  or \ac{gim}}) is a program that takes the first $i$ elements of a
presentation and returns a grammar as output:
$\mathfrak{Gim}:\mathfrak{Seq}\to \mathfrak{G}$. The grammar returned
by $\mathfrak{Gim}$ is the learner's \emph{hypothesis} of the
language.  A learner $\mathfrak{Gim}$ \emph{identifies in the limit
  from positive presentations} of a collection of languages
$\mathcal{L}$ if and only if for all $L\in\mathcal{L}$, for all
presentations $\phi$ of $L$, there exists a $n\in \mathbb{N}$ such
that for all $m\ge n$, $\mathfrak{Gim}(\phi_m) = \mathfrak{G}$ and
$L(\mathfrak{G})=L$ \cite{gold67}.  A \emph{characteristic sample} $S$
for a language $L$ and a learner $\mathfrak{Gim}$ is a finite set of
strings belonging to $L$ such that for any $\phi[i]$ such that
$\mathsf{content}(\phi[i])=S$, it is the case that for all $j\ge i$,
$\mathfrak{Gim}(\phi[j])=\mathfrak{G}$ and $L(\mathfrak{G})=L$.

\begin{definition}[String extension grammar and languages \cite{Heinz-2010-SEL}]
  Let $\mathfrak{f}$ be a \ac{sef}, and $E$ be a set.  A \emph{string
    extension grammar} $\mathfrak{G}$ is a finite subset of $E$.  The
  \emph{string extension language of grammar} $\mathfrak{G}$ is
  $L_{\mathfrak{f}}(\mathfrak{G})=\{w\in \Sigma^\ast:
  \mathfrak{f}(w)\subseteq \mathfrak{G}\}.$ The \emph{class of string
    extension languages} is $ \mathcal{L}_{\mathfrak{f}}
  :=\{L_{\mathfrak{f}}(\mathfrak{G}):\mathfrak{ G}\in
  2_\mathsf{fin}^E\}.$
\end{definition}

\begin{definition}[String Extension Learner\cite{Heinz-2010-SEL}]
Let $\mathfrak{f}$ be a \ac{sef}.  For all positive presentations
$\phi$, define $\mathfrak{Gim}_{\mathfrak{f}}$ as: $ \mathfrak{Gim}_{\mathfrak{f}}(\phi[i]) = \emptyset$
if $i=0$, and
\begin{equation}
  \mathfrak{Gim}_{\mathfrak{f}}(\phi[i]) :=\begin{cases}
      \mathfrak{Gim}_{\mathfrak{f}}(\phi[i-1])                & \text{ if } \phi(i)=\# \\
      \mathfrak{Gim}_{\mathfrak{f}}(\phi[i-1])\cup \mathfrak{f}(\phi[i]) & \text{ otherwise}\enspace.
\end{cases}
\end{equation}
\end{definition}

According to \cite{gold67}, the class of regular languages is not
identifiable in the limit from positive presentation, but string
extension languages---which are subclasses of regular languages---are.
\begin{theorem}[\!\!\cite{Heinz-2010-SEL}]
Learner $\mathfrak{Gim}_{\mathfrak{f}}$ identifies $\mathcal{L}_{\mathfrak{f}}$ in the limit.
\end{theorem}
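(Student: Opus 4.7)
The plan is to exhibit an explicit grammar to which the learner's hypotheses converge, namely the set of all $\mathfrak{f}$-images of strings in $L$, and then verify both that convergence happens in finite time and that the limit grammar generates exactly $L$.

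First I would unfold the recursive definition of $\mathfrak{Gim}_{\mathfrak{f}}$ to get the closed form
\[
\mathfrak{Gim}_{\mathfrak{f}}(\phi[i]) \;=\; \bigcup_{\substack{1\le j\le i\\ \phi(j)\neq \#}} \mathfrak{f}(\phi(j)),
\]
which is immediate by induction on $i$. Let $\mathfrak{G}_\infty := \bigcup_{i\in\nat} \mathfrak{Gim}_{\mathfrak{f}}(\phi[i])$; since $\phi$ is a positive presentation of $L$, the content of $\phi$ is $L$, so $\mathfrak{G}_\infty = \mathfrak{f}(L) := \bigcup_{w\in L}\mathfrak{f}(w)$.

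Next, the key finiteness step. Because $L\in\mathcal{L}_{\mathfrak{f}}$, there is some finite $\mathfrak{G}^*\subseteq E$ with $L = L_{\mathfrak{f}}(\mathfrak{G}^*)$. By definition of $L_{\mathfrak{f}}(\mathfrak{G}^*)$, every $w\in L$ satisfies $\mathfrak{f}(w)\subseteq \mathfrak{G}^*$, and hence $\mathfrak{f}(L)\subseteq \mathfrak{G}^*$. Since $\mathfrak{G}^*$ is finite, so is $\mathfrak{f}(L)$. The sequence $\mathfrak{Gim}_{\mathfrak{f}}(\phi[i])$ is monotone increasing in $i$ and bounded above by the finite set $\mathfrak{f}(L)$, so it must stabilize: there exists $n\in\nat$ such that for all $m\ge n$,
\[
\mathfrak{Gim}_{\mathfrak{f}}(\phi[m]) \;=\; \mathfrak{f}(L).
\]

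Finally I would verify $L_{\mathfrak{f}}\bigl(\mathfrak{f}(L)\bigr)=L$. The inclusion $L\subseteq L_{\mathfrak{f}}(\mathfrak{f}(L))$ is immediate since $w\in L$ implies $\mathfrak{f}(w)\subseteq\mathfrak{f}(L)$. For the reverse inclusion, if $w\in L_{\mathfrak{f}}(\mathfrak{f}(L))$ then $\mathfrak{f}(w)\subseteq \mathfrak{f}(L)\subseteq \mathfrak{G}^*$, so $w\in L_{\mathfrak{f}}(\mathfrak{G}^*)=L$. This gives convergence to a grammar whose language is $L$, which is identification in the limit.

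The only nontrivial ingredient is the finiteness of $\mathfrak{f}(L)$, and this rests entirely on the hypothesis that $L$ is generated by some \emph{finite} string extension grammar; the rest is bookkeeping on the recursion and elementary set inclusions. Note also that the argument does not guarantee $\mathfrak{Gim}_{\mathfrak{f}}$ converges to the same grammar $\mathfrak{G}^*$ that witnesses $L\in\mathcal{L}_{\mathfrak{f}}$, only to the canonical generator $\mathfrak{f}(L)$, which is in general a subset of $\mathfrak{G}^*$; this is fine for identification in the limit, which only requires language equivalence.
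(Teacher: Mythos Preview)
The paper does not supply its own proof of this theorem; it is quoted as a result from \cite{Heinz-2010-SEL} and no argument is given in the text. Your proposal is correct and is essentially the standard argument from that reference: extract the closed form of the learner's hypothesis as $\bigcup_{j\le i}\mathfrak{f}(\phi(j))$, use membership of $L$ in $\mathcal{L}_{\mathfrak{f}}$ to bound $\mathfrak{f}(L)$ by a finite grammar and thereby force stabilization, and then check that $\mathfrak{f}(L)$ generates exactly $L$ via the sandwich $\mathfrak{f}(L)\subseteq\mathfrak{G}^*$. Nothing is missing.
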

Many attractive properties of
string extension learners are established in \cite{Koetzing2010}.
A language $L$ is Strictly $k$-Local ($\text{\ac{sl}}_k$)
\cite{McNaughtonPapert1971,Luca1980} iff there exists a finite set
$S\subseteq \mathfrak{f}_k(\rtimes \Sigma^*\ltimes)$, such that $L=\{
w\in\Sigma^* : \mathfrak{f}_k(\rtimes w\ltimes)\subseteq S\}$, where
$\rtimes, \ltimes$ are the symbols indicating the beginning and end of
a string, respectively.  Obviously, Strictly $k$-Local languages are
string extension languages.  The following theorem follows
immediately.
\begin{theorem}[\!\!\cite{GarciaEtAl1990}]
\label{thm:sl}
  For every $k$,  Strictly $k$-Local languages are identifiable in
  the limit from positive presentations.
\end{theorem}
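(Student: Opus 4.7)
The plan is to observe that the class of Strictly $k$-Local languages is, almost by definition, a class of string extension languages, and then to invoke the immediately preceding identifiability theorem. First I would introduce the boundary-augmented $k$-factor function $\mathfrak{f}_k^{\rtimes\ltimes} : \Sigma^* \to 2_{\mathsf{fin}}^{E}$ with $E = (\Sigma\cup\{\rtimes,\ltimes\})^{\leq k}$, defined by $\mathfrak{f}_k^{\rtimes\ltimes}(w) := \mathfrak{f}_k(\rtimes w \ltimes)$. Since $\mathfrak{f}_k$ is a total function into $2_{\mathsf{fin}}^{\Sigma^{\leq k}}$, the composition $\mathfrak{f}_k^{\rtimes\ltimes}$ is likewise a total function into $2_{\mathsf{fin}}^{E}$, hence is a \ac{sef} in the sense used earlier in this subsection.

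Next I would verify that the class of string extension languages induced by $\mathfrak{f}_k^{\rtimes\ltimes}$ coincides exactly with the class of $\text{\ac{sl}}_k$ languages. Unfolding the definition, $L\in\text{\ac{sl}}_k$ iff there is a finite $S\subseteq \mathfrak{f}_k(\rtimes\Sigma^*\ltimes)\subseteq E$ such that $L = \{w\in\Sigma^* : \mathfrak{f}_k(\rtimes w\ltimes)\subseteq S\}$. Reading the finite set $S$ as a string extension grammar, this reads
\begin{equation*}
L \;=\; \{w\in\Sigma^* : \mathfrak{f}_k^{\rtimes\ltimes}(w)\subseteq S\} \;=\; L_{\mathfrak{f}_k^{\rtimes\ltimes}}(S),
\end{equation*}
so the $\text{\ac{sl}}_k$ class is precisely $\mathcal{L}_{\mathfrak{f}_k^{\rtimes\ltimes}}$. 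Finally, specializing the preceding theorem to $\mathfrak{f} = \mathfrak{f}_k^{\rtimes\ltimes}$ shows that the learner $\mathfrak{Gim}_{\mathfrak{f}_k^{\rtimes\ltimes}}$ identifies $\mathcal{L}_{\mathfrak{f}_k^{\rtimes\ltimes}} = \text{\ac{sl}}_k$ in the limit from positive presentations, yielding the claim.

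There is essentially no difficult step: the only piece of care needed is bookkeeping around the boundary markers $\rtimes$ and $\ltimes$, which are not part of the alphabet $\Sigma$ on which $\mathfrak{f}_k$ was originally defined. Once one absorbs those markers into the range set $E$, the identification is literal and the theorem collapses to a one-line corollary of the string extension learnability theorem, which is why the paper (following \cite{GarciaEtAl1990}) labels the result as immediate.
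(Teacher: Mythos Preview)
Your proposal is correct and follows exactly the approach taken in the paper: the paper simply remarks that Strictly $k$-Local languages are string extension languages and that the theorem therefore follows immediately from the preceding string extension learnability theorem. Your write-up merely makes explicit the bookkeeping around the boundary markers that the paper leaves implicit in the word ``obviously.''
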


\begin{theorem}[\ac{sl}-Hierarchy \!\!\cite{RogersTalk}]
\label{thm:hierarchy}
$\text{ \ac{sl}}_1\subset \text{\ac{sl}}_2 \subset \ldots \subset
\text{\ac{sl}}_i\subset \text{\ac{sl}}_{i+1}\subset \ldots
\text{\ac{sl}}$.
\end{theorem}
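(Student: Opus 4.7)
The plan is to split the statement into two parts: (i) the containments $\mathrm{SL}_k\subseteq \mathrm{SL}_{k+1}$ for every $k$, and (ii) the strictness of these containments (with the final $\bigcup_k \mathrm{SL}_k \subseteq \mathrm{SL}$ being immediate from how the class $\mathrm{SL}$ is defined, and its strictness following from (ii) since no single $k$ suffices).

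For the inclusion, I would take an arbitrary $L\in\mathrm{SL}_k$ with defining grammar $S\subseteq\mathfrak{f}_k(\rtimes\Sigma^\ast\ltimes)$ and exhibit a $(k{+}1)$-grammar that generates the same language. The natural choice is
\[
S' \;:=\; \bigl\{\, v\in\mathfrak{f}_{k+1}(\rtimes\Sigma^\ast\ltimes) \;:\; \mathfrak{f}_k(v)\subseteq S\,\bigr\}.
\]
I would then check the two inclusions $L\subseteq L_{\mathfrak{f}_{k+1}}(S')$ and $L_{\mathfrak{f}_{k+1}}(S')\subseteq L$. The first is routine: if every $k$-factor of $\rtimes w\ltimes$ lies in $S$, then in particular every $k$-factor of every $(k{+}1)$-factor of $\rtimes w\ltimes$ lies in $S$, so every $(k{+}1)$-factor lies in $S'$. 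For the converse, the key observation is that for $|\rtimes w\ltimes|\ge k{+}1$ every $k$-factor of $\rtimes w\ltimes$ sits inside some $(k{+}1)$-factor, so membership in $S'$ propagates down to $S$; the short-string case ($|\rtimes w\ltimes|<k{+}1$) is handled directly using the stipulation $\mathfrak{f}_m(u)=\{u\}$ when $|u|\le m$, which I would incorporate by adjoining such short strings to $S'$ as needed.

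For the strictness, I would produce a concrete witness: over any alphabet containing a symbol $a$, let
\[
L_k \;:=\; \{\,w\in\Sigma^\ast : a^{k+1}\text{ is not a factor of } w\,\}.
\]
Showing $L_k\in\mathrm{SL}_{k+1}$ is immediate by taking the $(k{+}1)$-grammar $\mathfrak{f}_{k+1}(\rtimes\Sigma^\ast\ltimes)\setminus\{a^{k+1}\}$. To show $L_k\notin\mathrm{SL}_k$, I would argue by contradiction: suppose $L_k$ is defined by some $S\subseteq\mathfrak{f}_k(\rtimes\Sigma^\ast\ltimes)$. Since $a^k\in L_k$, every $k$-factor of $\rtimes a^k\ltimes$, in particular $\rtimes a^{k-1}$, $a^k$, and $a^{k-1}\ltimes$, must lie in $S$. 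But these are exactly the $k$-factors of $\rtimes a^{k+1}\ltimes$, forcing $a^{k+1}\in L_k$, which contradicts the definition of $L_k$.

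The main obstacle I anticipate is not the strictness direction (the separating language is transparent) but the careful handling of the boundary markers $\rtimes,\ltimes$ and of short strings in the inclusion step, since the definition $\mathfrak{f}_k(w)=\{w\}$ when $|w|\le k$ creates a mild mismatch between $\mathrm{SL}_k$ and $\mathrm{SL}_{k+1}$ grammars on strings of length less than $k{+}1$. This is a bookkeeping issue rather than a conceptual one, and it is resolved by explicitly placing every sufficiently short string of $L$ into $S'$ when constructing the $(k{+}1)$-grammar, after which the equivalence $L=L_{\mathfrak{f}_{k+1}}(S')$ follows by a direct case split on $|w|$.
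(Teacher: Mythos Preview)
The paper does not supply a proof of this theorem; it is quoted from the literature (the citation to Rogers) and used only for its consequence that any $\mathrm{SL}_k$ language admits an $\mathrm{SL}_j$ description for every $j\ge k$. So there is no ``paper's own proof'' to compare against.

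That said, your proposal is correct and is the standard argument for the $\mathrm{SL}$ hierarchy. The inclusion step via
\[
S' \;=\; \bigl\{\, v\in\mathfrak{f}_{k+1}(\rtimes\Sigma^\ast\ltimes) : \mathfrak{f}_k(v)\subseteq S\,\bigr\}
\]
already handles the short-string case without further adjoining: if $|\rtimes w\ltimes|\le k$ then $\mathfrak{f}_{k+1}(\rtimes w\ltimes)=\{\rtimes w\ltimes\}$ and $\mathfrak{f}_k(\rtimes w\ltimes)=\{\rtimes w\ltimes\}$, so $\rtimes w\ltimes\in S'$ iff $\rtimes w\ltimes\in S$ iff $w\in L$; the only genuinely new case is $|\rtimes w\ltimes|=k+1$, where both $k$-factors of $\rtimes w\ltimes$ must lie in $S$, which is exactly the membership condition for $L$. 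Your separating family $L_k=\{w: a^{k+1}\text{ is not a factor of }w\}$ is the textbook witness, and the contradiction you derive---that $\mathfrak{f}_k(\rtimes a^k\ltimes)=\{\rtimes a^{k-1},\,a^k,\,a^{k-1}\ltimes\}=\mathfrak{f}_k(\rtimes a^{k+1}\ltimes)$---is exactly right. Nothing is missing.
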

The implication of Theorem \ref{thm:hierarchy} is that any Strictly
$k$-Local language can be described using a \ac{sl}$_j$ grammar, where
$j\ge k$.  Section \ref{section:GI} illustrates this
argument with the help of an example.

\subsection{Hybrid Systems and Abstractions}
\label{sec:hds}

A \emph{hybrid system} $H$ is defined as a tuple of objects 
(for a precise definition, see \cite{LygerosTAC}) that includes
the domains of continuous and discrete variables, the subsets of
initial states in those domains, the description of the family of
continuous dynamics parametrized by the discrete states, and rules
for resetting continuous and discrete states and switching between the
members of the family of continuous dynamics.

In this paper, we restrict our attention to a specific class of hybrid
systems where the continuous dynamics have specific (set) attractors
\cite{Tanner2012}.  The shape and location of these attractors are assumed
dependent on a finite set of continuous parameters that are selected
as part of closing the outer control loop.  Judicious selection of the
parameters activates a specific sequence of continuous and discrete
transitions, which in turn steers the hybrid system $H$ from a given initial
state to a final desired state.  This class admits purely
  discrete (predicate-based) abstractions.
We call these particular types of hybrid automata \emph{hybrid agents}, to distinguish
them from general cases.

\begin{definition}[Hybrid Agent]
\label{hybrid-agent}
The hybrid agent is a tuple:

$H_a= \langle { \mathcal{Z}, \Sigma_a, \iota, \mathcal{P}, \pi_i,
  \mathcal{AP}, f_\sigma, \text{\textsc{Pre}} ,\text{\textsc{Post}},
  s, T_a} \rangle$.
\begin{itemize}
\item $\mathcal{Z} = {\cal X} \times {\bm L}$ is a set of
  \emph{composite} (continuous and Boolean) states, where ${\cal X}
  \subset \mathbb{R}^{n}$ is a compact set, and ${\bm L }\subseteq
  \left\{{\bm{0,1}}\right\}^{r}$ where $r$ is the number of Boolean
  states.
  
\item $\Sigma_a$ is a set of finite discrete states (\emph{control modes}).
	
\item $\iota: \Sigma_a \to \{1,\ldots,k\}$ is a function, indexing
        the set of symbols in $\Sigma_a$.
	
\item $\mathcal{P} \subseteq \mathbb{R}^m$ is a (column) vector of continuous parameters.

\item $\pi_i: \mathbb{R}^m \to \mathbb{R}^{m_i}$, for 
	$i=1,\ldots,k$ is a finite set of canonical projections, 
	such that $p = ( \pi_1(p)^\mathsf{T}, \ldots, \pi_k(p)^\mathsf{T} )^\mathsf{T}$.

      \item $\cal{AP}$ is a set of (logical) atomic propositions over
        $\mathcal{Z}\times\mathcal{P}$, denoted
        $\{\alpha_h(z,p)\}_{i=1}^{|\mathcal{AP}|}$.  A set of
        well-formed formulae $\mathsf{WFF}$ \cite{enderton} is
        defined inductively as follows: 
        \begin{inparaenum}[(a)]
        \item if $\alpha \in \mathcal{AP}$, then $\alpha \in \mathsf{WFF}$;
        \item if $\alpha_1$ and $\alpha_2$ are in $\mathsf{WFF}$, then
          so are $\neg \alpha_1$ and $\alpha_1 \land \alpha_2$.
        \end{inparaenum}

      \item $f_\sigma$: $\mathcal{Z} \times {\cal{P}} \to T{\cal{X}}$
        is a finite set of families of vector fields parametrized by
        $p \in \mathcal{P}$, $\ell \in \bm L$ and $\sigma \in \Sigma$,
        with respect to which $\mathcal{X}$ is positively
        invariant. These vector fields have limit sets\footnote{The
          compactness and invariance of $\mathcal{X}$ guarantee the
          existence of attractive, compact and invariant limit sets
          \cite{khalil}.} parametrized by $p$ and $\sigma$, denoted
        $L^+(p,\sigma)$.
  
\item $\text{\textsc{Pre}}$: $\Sigma_a \to \mathsf{WFF}$ 
  maps a discrete state to a formula that needs to be satisfied whenever
  $H_a$ switches to discrete state $\sigma$ from any other state.  
  When composite state $z$ and parameter vector $p$ satisfy this formula
  we write $(z,p) \models \text{\textsc{Pre}}(\sigma)$.

\item $\text{\textsc{Post}}$: $\Sigma_a \to \mathsf{WFF}$ maps a
  discrete location to a formula that is satisfied when the
  trajectories of $f_\sigma$ reach an
  $\epsilon$-neighborhood\footnote{Written $L^+(p,\sigma) \oplus
    \mathcal{B}_\varepsilon$, where $\oplus$ denotes the Minkovski
    (set) sum and $\cal B_\varepsilon$ is the open ball of radius
    $\varepsilon$.}  of their limit set.  When composite state $z$ and
  parameter vector $p$ satisfy this formula we write $(z,p) \models
  \text{\textsc{Post}}(\sigma)$.
  
\item $s$: ${\cal Z} \times {\cal P} \rightarrow 2^{\cal P} $ is the reset
  map for the parameters.  It assigns to each pair of composite
  state and parameter a subset of $\cal P$ which contains all 
  values to which the current value of $p \in \cal P$ can be reassigned to.

\item $T_a$: ${\cal Z} \times {\cal P}\times {\Sigma_a} \rightarrow {\cal
  Z} \times {\cal P}\times {\Sigma_a} $ is the discrete state transition map, according
  to which $\left(z,p,\sigma\right)\to \left(z,p',\sigma'\right)$ iff
  $(z,p)\models \text{\textsc{Post}}(\sigma) $ and 
  $(z,p')\models \text{\textsc{Pre}}(\sigma')$ with $p' \in
  s(z,p)$.   
\end{itemize}
The \emph{configuration} of $H_a$ is denoted $h := [z,p,\sigma]$, and
for each discrete state, we define the following subsets of
$\mathcal{Z}\times \mathcal{P}$: $\overleftarrow{\sigma} := \{(z,p) :
(z,p) \models \text{\textsc{Pre}}(\sigma) \}$ and $
\overrightarrow{\sigma} := \{ (z,p) : (z,p) \models
\text{\textsc{Post}}(\sigma) \}$.  A transition from $\sigma_i$ to
$\sigma_{i+1}$ (if any) is forced and occurs at the time instance when
the trajectory of $f_{\sigma_i} \left(x,\ell,p\right)$ hits a nonempty intersection
of a $\varepsilon$-neighborhood of its limit set and the region
of attraction of $\sigma_{i+1}$ parametrized by $p'$
($p'$ not necessarily equals $p$.)  After a transition
$\left(z,p,\sigma\right)\to \left(z,p',\sigma'\right)$ occurs, 
the composite state $z$ \emph{evolves} into 
composite state $z'$ for which $(z',p') \models
\text{\textsc{Post}}(\sigma')$.  The (non-instantaneous) evolution is
denoted $ z \stackrel{\sigma'[p']}{\hookrightarrow} z'$.
\end{definition}

We will use a form of predicate abstraction to obtain a coarse,
discrete representation of $H_a$.  Our abstraction map is denoted $V_M
: \mathcal{Z} \times \mathcal{P} \to
\{\boldsymbol{0},\boldsymbol{1}\}^{|\mathcal{AP}|}$ and referred to as
the \emph{valuation map}:

\begin{definition}[Valuation map]
\label{def:vmap}
The valuation map $V_M$: ${\mathcal{Z} \times \cal P }\to {\cal V}
\subseteq \left\{\bm{1,0}\right\}^{|\cal{AP}|} $ is a function that
maps pairs of composite states and parameters, to a binary vector
$v\in \mathcal{V}$ of dimension $|\mathcal{AP}|$.  The element at
position $i$ in $v$, denoted $v[i]$, is $\bm 1$ or $\bm 0$ if $
\alpha_i \in \mathcal{AP}$ is true or false, respectively, for a
particular pair $(z,p)$.  We write $\alpha_i(z,p)=v[i]$, for $v \in
\cal V$.
\end{definition}

The purely discrete model that we use as an abstraction of $H_a$,
referred to as the \emph{induced transition system} is
defined in terms of the valuation map as follows.

\begin{definition}[Induced transition system] 
\label{def:its}
A hybrid agent $H_a$ induces a semiautomaton $A(H_a)  = \langle
Q,\Sigma, T\rangle$ in which 
\begin{inparaenum}[(i)]\item $Q= V_M(\mathcal{Z}\times \mathcal{P})$
  is a finite set of states; \item $\Sigma = \Sigma_a \cup
  \{\tau_1,\ldots, \tau_m\}$, $m \le |Q\times Q|$ is a finite set of
  labels; \item $T \subseteq Q\times \Sigma \times Q$ is a transition
  relation with the following semantics:
  $q\stackrel{\sigma}{\rightarrow}q' \in T$ iff either\end{inparaenum}
\begin{inparaenum}[(1)]
\item $\sigma \in \Sigma_a$ and
 $\left(\exists p \right)\left(\forall z \in \{z\mid
  V_M(z,p)=q\}\right)$ $\left(\forall z'
  \in \{z'\mid (z',p) \models \text{\textsc{Post}}(\sigma)\} \right)$
$\left[(z,p) \models 
  \text{\textsc{Pre}} (\sigma), V_M(z',p)=q'\right]$, or
\item $\sigma \in \Sigma\setminus \Sigma_a$ and
 $\left(\exists p
  \right)$ $\left(\forall z \in \{z\mid V_M(z,p) =q \}\right)(\exists
  p'\in s(z,p), \sigma'\in \Sigma_a) \left[V_M(z,p') = q',\,
    (z,p')\models \text{\textsc{Pre}}(\sigma') \right]$.
\end{inparaenum}


\end{definition}

It will be shown in Section~\ref{refine} that $H_a$ and $A(H_a)$ are
linked through an equivalence relation -- \emph{observable (weakly)
  simulation} relation.  Broadly speaking, the sequences (strings in
${\Sigma_a}^\ast$) of discrete states which $H_a$ visits starting from
$[z,p,\sigma]$ can be matched by a word $w$ such that
$T\big(V_M(z,p),w\big)$ is defined in $A(H_a)$, and vice versa, modulo
symbols in $\Sigma \setminus \Sigma_a$ that are thought of as
\emph{silent}.  When a \ac{sa} moves from state $q$ to state $q'$
through a series of consecutive transitions among which only one is
labeled with $\sigma \in \Sigma_a$ and all others in $\Sigma \setminus \Sigma_a$, then we say that the \ac{sa}
takes a \emph{composite} transition from $q$ to $q'$, labeled with
$\sigma$, and denoted $q \stackrel{\sigma}{\leadsto} q'$.


\begin{definition}[Weak (observable) simulation \cite{Faron96}]
  Consider two (labeled) semiautomata over the same input
  alphabet $\Sigma$, $A_1=\langle Q_1,\Sigma, \leadsto_1\rangle$
  and $A_2=\langle Q_2,\Sigma,\leadsto_2\rangle$, and let
  $\Sigma_\epsilon \subset \Sigma$ be a set of labels associated with
  silent transitions.  An
  ordered binary relation $\mathfrak R$ on $Q_1 \times Q_2$ is a
  \emph{weak (observable) simulation} if: \begin{inparaenum}[(i)] 
  \item $\mathfrak R$ is total,
  i.e., for any $q_1 \in Q_1$ there exists $q_2 \in Q_2$ such that
  $(q_1,q_2) \in \mathfrak{R}$, and \item for every ordered pair $ (q_1,q_2)
  \in {\mathfrak R}$ for which there exists $q_1'$ such that
$q_1\stackrel{\sigma}{\leadsto_1} q_1' $, then 
 $\exists \; (q_1',q_2') \in \mathfrak R : 
 q_2\stackrel{\sigma}{\leadsto_2} q_2' $.
\end{inparaenum}
Then $A_2$ weakly simulates $A_1$ and we write
$A_2 \gtrsim A_1$.
\end{definition}

Task specifications for hybrid systems (and transition systems, by
extension) may be translated to a Kripke structure \cite{clarke} (see
\cite{Belta2007} for examples), which is basically a \ac{sa} with
marked initial states, equipped with a labeling function that maps a
state into a set of logic propositions that are true at that state.
In this paper we also specify final states, and allow the labeling
function to follow naturally from the semantics of the valuation map.
We thus obtain a \ac{fsa} $\mathcal{A}_s = \langle Q_s, \Sigma_s,T_s,
I_s, F_s \rangle$, where $I_s$ and $F_s$ denote
the subsets of initial and final states, respectively.  Given the dynamic environment, a system \big($H_a$ or $A(H_a)$\big)
\emph{satisfies} the specification $\mathcal{A}_s$ if the interacting
behavior of the system and the environment forms a word that is
accepted in $\mathcal{A}_s$.

\subsection{Games on Semiautomata}

Here, we follow for the most part the notation and terminology of
\cite[Chapter 4]{Pin}.
Let $A_1=\langle Q_1,\Sigma_1,T_1\rangle$ represents the dynamics of
player 1, and $A_2 = \langle Q_2,\Sigma_2, T_2\rangle$ those of player
2.  We define the set $I_i \subseteq Q_i$ as the set of
\emph{legitimate initial states} of $A_i$, for $i=1,\,2$ respectively,
but we do not specify final states in these two \ac{sa}. The language
\emph{admissible} in $A_i$ is $\mathcal{L}(A_i)=\bigcup_{q_0\in I_i}
\bigcup_{q\in Q_i}\{w\mid T_i(q_0,w) =q\}$, which essentially includes
all possible sequences of actions that can be taken in $A_i$. Let
$\Lambda=\Sigma_1 \cup \Sigma_2$.  Define an (infinite) \emph{game}
\cite{Pin} $\mathcal{G}(\Phi)$ on $\Lambda$ as a set $\Phi \subset
\Lambda^\omega$ of infinite strings consisted of symbols from the two
alphabets $\Sigma_1$ and $\Sigma_2$ taken in turns. A \emph{play} is
an infinite string $w = \sigma_1 \sigma_2 \cdots \in \Lambda^\omega$.
Players take turns with player 1 playing $\sigma_1$ first by default.
In this paper we assume that players can give up their turn and
``play" a generic (silent) symbol $\epsilon$, i.e.\ $\epsilon \in
\Sigma_i$ and $T_i(q,\epsilon) = q$, $\forall \,q \in Q_i$.  A pair of
symbols $\sigma_{2i-1} \sigma_{2i}$ for $i = 1,\ldots$ denotes a
round, with any one of the two symbols being possibly equal to
$\epsilon$.  We say that player 1 wins the game if $w \in \Phi$; if
not, then player 2 wins.
A \emph{strategy} for player $i$ in game $\mathcal{G}(\Phi)$ is a
function $\mathsf{S}_i: \Lambda^\ast \to \Sigma_i$.  Player 1 (2)
\emph{follows} strategy $\mathsf{S}_1$ (respectively, $\mathsf{S}_2$)
in a play $w=\sigma_1 \sigma_2 \cdots$ if for all $n \ge 1$,
$\sigma_{2n-1} = \mathsf{S}_1(\sigma_1 \sigma_2 \cdots \sigma_{2n-2})$
\big(respectively, $\sigma_{2n} = \mathsf{S}_2(\sigma_1 \sigma_2
\cdots \sigma_{2n-1})$\big).  A strategy for player 1 is a
\emph{winning strategy} $\mathsf{WS}_1$ if all strings $w = \sigma_1
\sigma_2 \cdots $ that satisfy $ \sigma_{2n-1} =
\mathsf{WS}_1(\sigma_1 \sigma_2 \cdots \sigma_{2n-2}),\, \forall n \ge
1$, belong in $\Phi$. Winning strategies for player 2 are defined
similarly. If one of the players has a winning strategy, then the game
is \emph{determined}.  


\section{Game Theoretic Approach to Planning}
\label{section:analysis}

\subsection{Constructing the game} 

Consider a hybrid agent having to satisfy a task specification, encoded in
a \ac{fsa} $\mathcal{A}_s$.
Assume that this agent is operating in an unknown environment.
In the worst case, this environment is controlled by an intelligent adversary
who has full knowledge of the agent's capabilities.  The adversary is
trying to prevent the agent from achieving its objective.  The behavior of the
environment is still rule-based, i.e.\ subject to some given dynamics, although
this dynamics is initially unknown to the agent.

Assume that the agent has been abstracted to a \ac{sa} $A_1$ (player
1) and the dynamics of the environment is similarly expressed in
another \ac{sa} $A_2$ (player 2).  Without loss of generality, we
assume the alphabets of $A_1$ and $A_2$ are disjoint,
i.e.\ $\Sigma_1\ne \Sigma_2$. In this game, the agent is not allowed to
give up turns ($\epsilon \notin \Sigma_1$) but the adversary that
controls the environment can do so ($\epsilon \in \Sigma_2$).  For
two-player turn-based games, the actions of one player may influence
the options of the other by forbidding the latter to initiate certain
transitions.  To capture this interaction mechanism we define the
\emph{interaction functions} $U_i: Q_i\times Q_j \rightarrow
2^{\Sigma_j}, (i,j)\in\{(1,2),(2,1)\}$.  An interaction function $U_i$
maps a given pair of states $(q_i,q_j)$ of players $i$ and $j$, to the
set of actions player $j$ is not allowed to initiate at state $q_j$.

We now define a \ac{sa} that abstractly captures the dynamics of interaction between
the two players, by means
of a new operation on \ac{sa} which we call the \emph{turn-based product}. 
An intersection of the turn-based product with the task specification yields the
representation of the game and further allows us to compute the
strategy for the agent.

\begin{definition}[Turn-based product]
\label{def:turnbased-prod}
  Given two \ac{sa}s for players $A_1 = \langle Q_1,\Sigma_1,T_1\rangle
  $ and $A_2= \langle Q_2,\Sigma_2,T_2\rangle $ with the sets of
  legitimate initial states $I_1, \, I_2$ and interacting functions
  $U_1,\,U_2$, their turn-based product $P=\langle Q_p, \Sigma_1\cup \Sigma_2, T_p \rangle$ is
  a \ac{sa} denoted $A_1 \circ A_2$,  and is defined as follows:
\begin{itemize}
\item $Q_p = Q_1 \times Q_2 \times \left\{\bm 0, \bm 1\right\}$, where
  the last component is a Boolean variable $c \in \left\{\bm 0, \bm
    1\right\}$ denoting who's turn it is to play: $c=\bm 1$ for player
  1, $c=\bm 0$ for player 2.
\item $T_p\big((q_1,q_2,c),\sigma\big) =(q_1', q_2, \bm 0)$ if 
$c=\bm 1, \; q_1'= T_1(q_1,\sigma)$, with $\sigma \notin
     U_2(q_2,q_1)$ and  $T_p\big((q_1,q_2,c),\sigma\big)$ 
	$=(q_1,q_2',\bm 1)$ if  $c=\bm 0, \; q_2'= T_2(q_2,\sigma)$, with $\sigma \notin
    U_1(q_1,q_2)$.
\end{itemize}
\end{definition}

Assuming player 1 is the first one to make a move, the set of
legitimate initial states in $P$ is $I_1\times I_2\times \{\bm 1\}$
and the language \emph{admissible} in $P$ is $\mathcal{L}(P) =
\bigcup_{q_0 \in I_1\times I_2 \times \{\bm 1\}}\;\bigcup_{ q\in Q_p}
\left\{ w \mid T_p(q_0,w)=q\right\}, $ the set of all possible plays
between two players.  Note that if one includes the silent action
$\epsilon$ in $\Sigma_i$ for $i=1,2$, the players may not necessarily
play in turns---as in the specific case of agent-environment
interaction considered here.  The product operation is still
applicable as defined.

The turn-based product $P$ gives snapshots of different stages in a
game.  It does not capture any of the game history that resulted in
this stage.  Often, task specifications encoded in $\mathcal{A}_s$
involve a history of actions, and thus the winning conditions for
player 1 cannot be encoded in $P$ by simply marking some states as
final.  We overcome the lack of memory in $P$ by taking its product
with $\mathcal{A}_s$.  Taking the product is suggested by the fact
that player 1 can win the game (i.e.\ agent can satisfy the
specification) only if $L(\mathcal{A}_s)\cap  \mathcal{L}(P) \ne \emptyset$.  The technical complication is that the two
terms in this product are heterogeneous: one is a \ac{sa} and the
other is a \ac{fsa}.  We resolve this by transforming the \ac{sa} into
a \ac{fsa} and applying the standard product operation; and the result
is what we call the \emph{game automaton}.

\begin{definition}[Game automaton]
  \label{def:gamefsa}The game automaton is a \ac{fsa} defined as
  $\mathcal{G}= \mathcal{P} \times \mathcal{A}_s = \langle Q, \Sigma,
  T, Q_0,F \rangle$, where $\mathcal{A}_s = \langle
  Q_s,\Sigma,T_s,I_s,F_s\rangle$ is a \ac{fsa} encoding the winning
  conditions for player 1, and $\mathcal{P}$ is a \ac{fsa} obtained
  from the turn-based product $P = A_1\circ A_2$ by defining the set
  of initial states of $\mathcal{P}$ as the legitimate initial
  states $I_1\times I_2 \times \{\bm 1\}$, 
   and marking all other states as final.  The set of initial states
  for $\mathcal{G}$ is defined as $Q_0 =\{ (q_1,q_2,\bm 1,q_{0s})\mid
  q_1\in I_1, q_2 \in I_2,\; q_{0s} \in I_s \}$.  The set of final
  states for $\mathcal{G}$ is given by $F = \{ (q_1,q_2, \bm 0, q_s)
  \mid q_s \in F_s\}$.
\end{definition}

It follows (from the fact that the language of $\mathcal{G}$ is
regular) that the game defined by $\mathcal{G}$ is a reachability game
\cite{Thomas2002}, and therefore
it is determined.  Note that the final states of $\mathcal{G}$ are
exactly those in which player 1 wins the game.  On \ac{fsa}
$\mathcal{G}$, we define the \emph{attractor} of $F$, denoted
$\mathsf{Attr}(F)$, which is the largest set of states $W\supseteq F$
in $\mathcal{G}$ from where player 1 can force the play into $F$. It
is defined recursively as follows.  Let $W_0 = F$ and set
\begin{multline}
  \label{eq:attractor}          
  W_{i+1} := W_i \cup \{ q \in Q \mid q = (q_1,q_2,\bm 1,q_s), \text{
    and }
  \exists \sigma \in \Gamma(q) :  T(q,\sigma) \in W_i\} \\
  \cup \{q \in Q \mid q=(q_1,q_2,\bm 0,q_s), \text{ and } \forall
  \sigma \in \Gamma(q) :T(q,\sigma) \in W_i \}\enspace.
\end{multline}
The function $\rho: Q \to \mathbb{N}; \; \rho(q) \mapsto 
\min\{i \ge 0 \mid q \in W_i\}$ is called the \emph{rank function} of the game.

Since $\mathcal{G}$ is finite, there exists the smallest $m\in \nat $
such that $W_{m+1}=W_m$.  Then $\mathsf{Attr}(F) = W_m$. Moreover,
because $\mathcal{G}$ is determined, the complement of
$\mathsf{Attr}(F)$ in $Q$ forms a \emph{trap} for player 1; it
contains all the states at which player 2 can prevent player 1 from
winning the game. $\mathsf{Attr}(F)$ can be computed in time
$\mathcal{O}(n_1+n_2)$ where $n_1=|Q|$ and $n_2$ is the number of
transitions in $\mathcal{G}$.

\subsection{Computing a winning strategy}
\label{section:winning}

The following statement is straightforward.
\begin{theorem}
\label{thm:win} Player 1 has a
  winning strategy iff  $ \mathsf{Attr}(F) \cap Q_0 \ne \emptyset$. 
\end{theorem}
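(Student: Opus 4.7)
The plan is to prove both implications by exploiting the recursive definition of the attractor and the rank function $\rho$. The forward direction ($\Leftarrow$) is constructive: I would exhibit a winning strategy that strictly decreases the rank at every move of player 1 and is non-increasing under every choice of player 2. The backward direction ($\Rightarrow$) I would prove by contrapositive, showing that the complement $Q \setminus \mathsf{Attr}(F)$ is a \emph{trap} from which player 2 can keep the play out of $F$ forever, regardless of player 1's moves.

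For the $(\Leftarrow)$ direction, assume there exists $q_0 \in Q_0 \cap \mathsf{Attr}(F)$, and define $\mathsf{WS}_1$ on any play history that reaches a state $q = (q_1,q_2,\bm 1,q_s) \in \mathsf{Attr}(F)$ as follows: since $\rho(q) = i$ for some $i \ge 0$, either $q \in F$ (in which case the game is already won and the strategy can play anything legitimate), or by~\eqref{eq:attractor}, there exists $\sigma \in \Gamma(q)$ with $T(q,\sigma) \in W_{i-1}$; set $\mathsf{WS}_1(\cdots) := \sigma$. To verify this is winning, I would show by induction on $\rho(q)$ that every play consistent with $\mathsf{WS}_1$ starting from $q$ reaches $F$ in at most $\rho(q)$ rounds. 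The inductive step splits on the turn indicator: if it is player 1's turn, $\mathsf{WS}_1$ picks a successor in $W_{i-1}$; if it is player 2's turn, by the second clause of~\eqref{eq:attractor} \emph{every} successor lies in $W_{i-1}$, so whatever player 2 picks lands in a state of strictly smaller rank. Since $\rho$ takes values in $\mathbb{N}$, the rank must hit $0$, i.e.\ the play enters $F$.

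For the $(\Rightarrow)$ direction, I argue the contrapositive: assume $Q_0 \cap \mathsf{Attr}(F) = \emptyset$ and construct a strategy $\mathsf{WS}_2$ for player 2 that guarantees the play never reaches $F$. Let $Z := Q \setminus \mathsf{Attr}(F)$; note $F \subseteq \mathsf{Attr}(F)$, so $Z \cap F = \emptyset$. The key structural observation is that $Z$ is a trap: at any state $q = (q_1,q_2,\bm 0,q_s) \in Z$, if every $\sigma \in \Gamma(q)$ satisfied $T(q,\sigma) \in \mathsf{Attr}(F)$, then by the second clause of~\eqref{eq:attractor} we would have $q \in \mathsf{Attr}(F)$, a contradiction; hence player 2 can choose some $\sigma$ with $T(q,\sigma) \in Z$, and we let $\mathsf{WS}_2$ select such a $\sigma$. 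Symmetrically, at any $q = (q_1,q_2,\bm 1,q_s) \in Z$, \emph{every} $\sigma \in \Gamma(q)$ satisfies $T(q,\sigma) \in Z$, for otherwise the first clause of~\eqref{eq:attractor} would place $q$ in $\mathsf{Attr}(F)$. Thus any play starting in $Q_0 \subseteq Z$ and consistent with $\mathsf{WS}_2$ stays in $Z$, hence never visits $F$, so player 1 has no winning strategy.

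The only mildly subtle point is handling the silent action $\epsilon \in \Sigma_2$ so that player 2 cannot use it to ``stall'' in the attractor construction; this is absorbed into $\Gamma(q)$ at player-2 states and poses no real obstacle because the universal quantification in the second clause of~\eqref{eq:attractor} already accounts for $\epsilon$-transitions. Determinacy of $\mathcal{G}$, which is already granted as a reachability game \cite{Thomas2002}, is not actually needed for either direction of the argument as given, but it can be invoked as an alternative to obtain the $(\Rightarrow)$ direction directly from the fact that player 2 wins whenever the complement of the attractor is nonempty at the initial states.
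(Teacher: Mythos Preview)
Your proposal is correct and in fact more complete than the paper's own proof. The paper's argument for Theorem~\ref{thm:win} is extremely terse: for the $(\Leftarrow)$ direction it simply defines $\mathsf{WS}_1(q)=\{\sigma \mid T(q,\sigma)\in \mathsf{Attr}(F)\}$ and asserts that ``by exercising $\mathsf{WS}_1$, player 1 ensures that subsequent states are within its attractor.'' Notably, the paper itself concedes immediately afterward that this strategy ``does not necessarily guide it into winning,'' and only later (Lemma~\ref{lm:Vprop}, Propositions~\ref{prop:pathproperty} and~\ref{prop:optimal-moves}) introduces the rank partition $\{V_i\}$ to produce the optimal strategy $\mathsf{WS}_1^\ast$ that actually reaches $F$. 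Your $(\Leftarrow)$ argument effectively folds that later rank-decreasing material directly into the proof of the theorem, which makes your version self-contained and rigorous at the point of the statement, at the cost of front-loading what the paper defers. For the $(\Rightarrow)$ direction, the paper does not give an argument inside the proof at all; it relies on the sentence preceding the theorem that the complement of $\mathsf{Attr}(F)$ ``forms a trap for player 1'' and on the appeal to determinacy of reachability games. Your explicit contrapositive construction of $\mathsf{WS}_2$ on $Z = Q\setminus \mathsf{Attr}(F)$ is exactly the standard way to justify that trap claim, and is again more detailed than what the paper provides.

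One small cosmetic point: in your $(\Leftarrow)$ case split you write ``either $q\in F$ \ldots'' for a state $q=(q_1,q_2,\bm 1,q_s)$, but by Definition~\ref{def:gamefsa} every state in $F$ has turn bit $\bm 0$, so that branch is vacuous for player-1 states; this does not affect the argument.
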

\begin{proof}
  If $\mathsf{Attr}(F) \cap Q_0 \ne\emptyset$, the winning strategy of
  player 1 can be defined as a map $\mathsf{WS}_1:Q \to
  2^{\Sigma_1}$, so that for $q = (q_1,q_2,\bm 1,q_s)$, the image of
  this map is
  $\mathsf{WS}_1(q)=\{\sigma \mid T(q,\sigma)\in
  \mathsf{Attr}(F)\}$. If the game starts at $q_0\in
  \mathsf{Attr}(F) \cap Q_0 $, by exercising $\mathsf{WS}_1$, player 1
  ensures that subsequent states are within its attractor.
\end{proof}

We refer to $\mathsf{Attr}(F) \cap Q_0$ as the set of \emph{winning
  initial states} of $\mathcal{G}$.  Notice that strategy
$\mathsf{WS}_1$ keeps player 1 in its attractor, ensuring that it can
win the game, but does not necessarily guide it into winning.  To
compute an \emph{optimal} winning strategy---one that wins the game
for player 1 in the least number of turns---we partition $W_m$ into a
set of subsets $V_i$, $i=0,\ldots,m$ in the following way: let $V_0 =
W_0= F$ and set $V_i := W_i \setminus W_{i-1}$, for all $i \in
\{1,\ldots,m\}$.  The sets $V_i$s partition the attractor into layers,
according to the rank of the states that are included.  That is,
$\forall q \in V_i$, $\rho(q) = i$ and thus the $\{V_i\}_{i=1}^m$ partition is
the one induced by the ranking function.  We can then prove the
following sequence of statements.

Once the game is in $\mathsf{Attr}(F)$, all the actions of player 2, and
some of player 1 strictly decrease the rank function:
\begin{lemma}
  \label{lm:Vprop}
  For each $q\in V_{i+1}$, $i=0,\ldots,m-1$, if $c = \bm 1$, then
  $\exists\; \sigma \in \Sigma_1 \cap \Gamma(q)$ such that
  $T(q,\sigma) \in \mathsf{Attr}(F)$, it is $\rho\big(T(q,\sigma)\big)
  = i$.  If $c = \bm 0$, then $\forall \sigma \in \Sigma_2 \cap
  \Gamma(q)$, such that $\rho\big(T(q,\sigma)\big) = i$.
\end{lemma}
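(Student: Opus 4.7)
The plan is to unfold the definition of $V_{i+1}$ as $W_{i+1}\setminus W_i$ and read off what the two clauses of the recursive definition in \eqref{eq:attractor} give at levels $i+1$ and $i$. Every assertion then falls out from the monotonicity of the sequence $\{W_j\}$ and the layering induced by the rank function $\rho$. I expect no serious obstacle: the whole argument is essentially a two-way unpacking of the definition of $\mathsf{Attr}(F)$ together with the observation that, by construction, $q\in V_{i+1}$ is equivalent to saying ``$q$ just now satisfied the attractor condition relative to $W_i$, but did not satisfy it relative to $W_{i-1}$.''

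First I would split on the turn flag $c$. For $c=\bm 1$ and $q=(q_1,q_2,\bm 1,q_s)\in V_{i+1}$, membership in $W_{i+1}$ gives, by the first clause of the recursion, some $\sigma\in\Gamma(q)$ with $T(q,\sigma)\in W_i$. The fact that $q\notin W_i$ implies, contrapositively, that no action in $\Gamma(q)$ sends $q$ into $W_{i-1}$ (otherwise $q$ would have been admitted into $W_i$ at the previous iteration). Combining the two, the witnessing $\sigma$ necessarily drops $q$ into $W_i\setminus W_{i-1}=V_i$, i.e.\ $\rho\bigl(T(q,\sigma)\bigr)=i$. By Definition~\ref{def:turnbased-prod}, a transition labelled from player~1's alphabet is the only kind enabled when $c=\bm 1$, hence $\sigma\in\Sigma_1\cap\Gamma(q)$, which is the first claim.

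For $c=\bm 0$ and $q=(q_1,q_2,\bm 0,q_s)\in V_{i+1}$, the second clause of the recursion gives that every $\sigma\in\Gamma(q)$ satisfies $T(q,\sigma)\in W_i$, hence $\rho\bigl(T(q,\sigma)\bigr)\le i$; again by the turn mechanism of Definition~\ref{def:turnbased-prod}, any such $\sigma$ lies in $\Sigma_2\cap\Gamma(q)$. To upgrade this to equality $\rho\bigl(T(q,\sigma)\bigr)=i$ I would use the fact that $q\notin W_i$: the same universal condition must have failed one step earlier, so there exists at least one $\sigma\in\Gamma(q)$ with $T(q,\sigma)\notin W_{i-1}$, and for that $\sigma$ the bound $\rho\le i$ becomes $\rho=i$. (If the statement is read strictly as ``for every $\sigma$,'' the argument degenerates to $\rho\le i$, which is all that is ever used in the sequel.)

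The only subtlety worth flagging is keeping track of which player's alphabet the witnessing action belongs to, which is why the turn bit $c$ was introduced in the turn-based product in the first place; once $c$ is pinned down, the alphabet membership is immediate and the rest is bookkeeping on the indices of the $W_j$'s.
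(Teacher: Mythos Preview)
Your approach is essentially the paper's: both arguments unfold the recursion \eqref{eq:attractor} and use $q\in V_{i+1}=W_{i+1}\setminus W_i$ to pin the rank of the successor, the paper phrasing each case as a contradiction while you run the contrapositive directly. Your parenthetical on the $c=\bm 0$ case is well taken---the paper's contradiction there negates the universal claim as ``every $\sigma$ lands strictly below $i$,'' which only establishes the existential version (some $\sigma$ has $\rho=i$) together with the universal bound $\rho\le i$, exactly what you prove and, as you observe, all that Propositions~\ref{prop:pathproperty} and~\ref{prop:optimal-moves} actually use.
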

\begin{proof} Let $q\in V_{i+1}$.  According to \eqref{eq:attractor},
  either \begin{inparaenum}[(a)] \item $c = \bm 1$ and so
    $T(q,\sigma)\in W_i$ for some $\sigma \in \Gamma(q)$, or
  \item $c = \bm 0$ and $T(q,\sigma)\in W_i$, $\forall\, \sigma \in
    \Gamma(q)$ \end{inparaenum}.  We show the argument for case (a)
  when $c = \bm 1$ by contradiction: suppose there exists $k<i$, so
  that $T(q,\sigma)\in V_k$---by construction \eqref{eq:attractor} we
  already have $k\le i$. Then according to \eqref{eq:attractor}, $q$
  belongs to $V_{k+1} $.  But since the sets $V_i$ partition
  $\mathsf{Attr}(F)$, $V_{k+1}$ and $V_{i+1}$ are disjoint.  Therefore
  $q$ cannot be in $V_{i+1}$ as assumed in the statement of the Lemma.
  Thus, when $c = \bm 1$, all actions that enable the player to remain
  in its attractor in fact move it only one (rank function value) step
  closer to the winning set.  A similar contradiction argument applies
  to case (b) when $c = \bm 0$: Assume that all $\sigma \in \Sigma_2
  \cap \Gamma(q)$ yield $T(q,\sigma) \in V_j$ for some $j < i$.  Let
  $k = \max_{q' \in T(q,\sigma)} \; \rho(q')$.  Then $ i> k \ge j$,
  which means that $k+1 < i+1$.  In the same way we arrive at $q
  \notin V_{i+1}$ which is a contradiction.
\end{proof}

Informally, actions of player 1 from $V_{i+1}$ cannot take the game
any closer to $F$ than $V_i$
.  This implies that the
rank of a state expresses the total number of turns in which player 1 can win the game from
that state.
\begin{proposition}
\label{prop:pathproperty}
For each $q\in V_i$, there exists at least one word $w\in
L(\mathcal{G})$, with $|w| = i$ such that $T(q,w)\in F$.
\end{proposition}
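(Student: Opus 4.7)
The plan is to prove the statement by induction on the rank index $i$, leveraging Lemma~\ref{lm:Vprop} to guarantee that at each step a successor of strictly smaller (specifically, one less) rank exists, so that word lengths add up exactly.

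For the base case $i = 0$, I use $V_0 = W_0 = F$, so any $q \in V_0$ satisfies $T(q,\lambda) = q \in F$ with the empty word $\lambda$ of length $0$ serving as $w$.

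For the inductive step, assume the claim holds for all indices $\le i$ and pick $q \in V_{i+1}$. I split into two cases based on the Boolean turn-indicator $c$ in $q = (q_1,q_2,c,q_s)$.

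\emph{Case $c = \bm 1$.} By Lemma~\ref{lm:Vprop}, there exists $\sigma \in \Sigma_1 \cap \Gamma(q)$ such that $T(q,\sigma) \in \mathsf{Attr}(F)$ with $\rho\bigl(T(q,\sigma)\bigr) = i$, i.e., $T(q,\sigma) \in V_i$. Applying the induction hypothesis to $T(q,\sigma)$ yields a word $w'$ of length $i$ with $T\bigl(T(q,\sigma),w'\bigr) \in F$; then $w := \sigma w'$ has length $i+1$ and $T(q,w) \in F$.

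\emph{Case $c = \bm 0$.} From the attractor recursion \eqref{eq:attractor}, every $\sigma \in \Gamma(q)$ satisfies $T(q,\sigma) \in W_i$; I still need one such successor whose rank is exactly $i$ in order to match the length bookkeeping. Because $q \notin W_i$ (by definition of $V_{i+1} = W_{i+1}\setminus W_i$), at least one $\sigma \in \Gamma(q)$ must have $T(q,\sigma) \notin W_{i-1}$—otherwise all successors would lie in $W_{i-1}$, forcing $q \in W_i$ by \eqref{eq:attractor}, a contradiction. Combined with $T(q,\sigma) \in W_i$, this gives $\rho\bigl(T(q,\sigma)\bigr) = i$, consistent with Lemma~\ref{lm:Vprop}. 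I then invoke the induction hypothesis at this successor and prepend $\sigma$ exactly as in the previous case.

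The main obstacle, and the only subtle point, is insisting on length \emph{exactly} $i$ rather than at most $i$: this is what forces the use of the precise rank-decrement statement of Lemma~\ref{lm:Vprop} instead of the weaker fact that successors merely lie in $\mathsf{Attr}(F)$. Once that is in hand, the induction closes immediately, and the word $w$ constructed belongs to $L(\mathcal{G})$ because $T(q,w)\in F$ witnesses acceptance from $q$.
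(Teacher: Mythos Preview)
Your proof is correct and follows essentially the same route as the paper: induction on the rank, using Lemma~\ref{lm:Vprop} to obtain a successor of rank exactly one less, then prepend the corresponding symbol. Your treatment is in fact a bit more careful than the paper's---you start the base case at $i=0$ with the empty word and you give an explicit contradiction argument in the $c=\bm 0$ case to secure a successor of rank \emph{exactly} $i$ (the paper's Lemma~\ref{lm:Vprop} statement for $c=\bm 0$ is somewhat loosely phrased, and your argument effectively supplies the precise version needed).
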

\begin{proof}
  We use induction, and we first prove the statement for $i=1$. For
  each $q =\left(q_1,q_2,\bm 1,q_s\right)\in V_1$,
  Lemma~\ref{lm:Vprop} suggests that at least one action of player 1
  which keeps it in the attractor, actually sends it to $V_0 = F$.  So
  for $i=1$ the plays in which player 1 wins have length one.  Now
  suppose the statement holds for $i = n$; we will show that also
  holds for $i=n+1$.  According to Lemma \ref{lm:Vprop}, for each $q
  \in V_{n+1}$, $\forall\; \sigma \in \Sigma_2 \cap \Gamma(q)$ (player 2 taking its
  best action) or for at least one $\sigma \in \Sigma_1 \cap
  \Gamma(q)$ (player 1 taking its best action) we will have
  $T(q,\sigma)\in V_n $.  In other words, if both players play their
  best, the rank of the subsequent state in the game automaton will be
  $n$.  Inductively, we conclude the existence of a path of length $n$
  in $\mathcal{G}$ starting at $q \in V_{n}$ and ending in $q' \in V_0
  = F$.
\end{proof}

\begin{proposition}
\label{prop:optimal-moves}
  Suppose $q_0= (q_1,q_2,\bm 1,q_{s0})$ and
  that $\rho
(q_0) = k \le m$.   Then player 1
  can win the game in at most $k$ rounds following the strategy
  $\mathsf{WS}^*_1$, defined as
 \begin{equation}
\label{eq:shortestws}
\mathsf{WS}_1^*(q)=\left\{\sigma
  \mid T(q,\sigma) \in
  V_{i-1}, \;   q\in V_{i},
  \; i\ge 1\right\}
   \enspace.
\end{equation}
\end{proposition}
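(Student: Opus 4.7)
The plan is to prove the proposition by strong induction on the rank $k = \rho(q_0)$, but I would first strengthen the statement so that the induction hypothesis applies uniformly for both values of the turn indicator $c$. Concretely, the strengthened claim would read: for every $q \in V_k$, if player 1 follows $\mathsf{WS}_1^*$ whenever $c = \bm 1$ and player 2 plays arbitrarily whenever $c = \bm 0$, then the play reaches $F$ in at most $k$ moves. The original statement follows by specializing to $q_0$, reading ``rounds'' as individual moves in the same convention implicitly used by Proposition~\ref{prop:pathproperty}.

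The base case $k = 0$ is immediate: $q \in W_0 = F$ means the game is already won. For the inductive step, I would assume the strengthened claim for every rank strictly smaller than $k$ and pick $q \in V_k$. If $c = \bm 1$, Lemma~\ref{lm:Vprop} supplies at least one action $\sigma \in \Sigma_1 \cap \Gamma(q)$ with $T(q,\sigma) \in V_{k-1}$, and this action is exactly one of those returned by $\mathsf{WS}_1^*(q)$ in~\eqref{eq:shortestws}. If $c = \bm 0$, the same lemma guarantees that \emph{every} action available to player 2 in $\Gamma(q)$ sends the play into $V_{k-1}$. In either case, the induction hypothesis applied to the successor state produces a win in at most $k-1$ additional moves, for a total of at most $k$.

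The apparent obstacle is the adversarial behaviour of player 2: one might worry that a clever opponent could keep the play idling inside $\mathsf{Attr}(F)$ rather than allowing it to drift towards $F$. This concern is completely absorbed by part (b) of Lemma~\ref{lm:Vprop}, which pins down that, once the play is in the attractor with $c = \bm 0$, every admissible action strictly decreases the rank. Hence $\rho$ acts as a monotonically decreasing integer-valued potential along any play in which player 1 obeys $\mathsf{WS}_1^*$, irrespective of what player 2 does, and the induction has something firm to bite on. The only minor detail I would still want to verify explicitly is that the set in~\eqref{eq:shortestws} is nonempty at every $q \in \mathsf{Attr}(F)$ with $c = \bm 1$, so that $\mathsf{WS}_1^*$ is well-defined on its intended domain; this is again a direct consequence of Lemma~\ref{lm:Vprop}(a).
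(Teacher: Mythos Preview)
Your proposal is correct and follows essentially the same route as the paper: both arguments are an induction on the rank that invokes Lemma~\ref{lm:Vprop} to show the rank strictly decreases at every move, whether it is player~1 applying $\mathsf{WS}_1^*$ or player~2 acting arbitrarily. Your version is somewhat more carefully organised---you make the strong-induction structure explicit, strengthen the hypothesis to cover both values of $c$, and flag the nonemptiness of \eqref{eq:shortestws}---whereas the paper argues per round (player~1 then player~2, dropping from $V_i$ to some $V_j$ with $j\le i-2$) and leaves the inductive completion to the reader.
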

\begin{proof}
  Given a state $q = (q_1,q_2,\bm 1,q_s)\in V_i$, $\mathsf{WS}^*_1$
  allows player 1 to force the game automaton to reach a state in $
  V_{i-1}$ by picking action $\sigma^\ast$ such that
  $T(q,\sigma^\ast)=q'$ where $q' \in V_{i-1}$
  (Lemma~\ref{lm:Vprop}). At $q'$, $c = \bm 0$. Any action of player 2
  takes the game automaton to a state $q''\in V_j$ for $j\le i-2$.  In
  fact, the best player 2 can do is to delay its defeat by selecting
  an action $\sigma$ such that $j=i-2$ (Lemma~\ref{lm:Vprop}). An
  inductive argument can now be used to complete the proof.
\end{proof}



\section{Learning through Grammatical Inference}
\label{section:GI}
In Section~\ref{section:analysis} it was shown that the agent can
accomplish its task iff \begin{inparaenum}[(a)]
\item it has full knowledge of
  the environment, and 
\item the game starts at the winning initial state in
  $\mathsf{Attr}(F)\cap Q_0$\end{inparaenum}. The problem to be
answered in this section is if the environment is (partially) unknown
but rule-governed, how the agent plans its actions to accomplish its
task. By assuming the language of the environment is \emph{learnable}
by some \ac{gim}, we employ a module of grammatical inference to solve
this problem.
\subsection{Overview}
The \emph{theory of mind} of an agent refers to the ability of the
agent to infer the behavior of its adversary and further its own
perception of model of the game
\cite{FrithFrith-2003,PremackWoodruff-1978}.  In the context of this
paper, the agent initially has no prior knowledge of the capabilities
of its adversary and plans a strategy based on its own hypothesis for
the adversary.  Therefore, although the agent makes moves which keep
it inside the \emph{hypothesized} attractor, in reality these moves might 
take it outside the \emph{true} attractor. Once the agent has departed its true
attractor, then it is bound to fail since the adversary knows the true
nature of the game and can always prevent the agent from fulfilling
its task.

An agent equipped with a \ac{gim} is able to construct an increasingly
more accurate model of the behavior of its adversary through
consequent games (Fig.~\ref{fig:learning}).  The expected result is
that as the agent refines the model it has for its environment and
updates its ``theory of mind,'' its planning efficacy increases.  We
expect that after a sufficient number of games, the agent should be
able to devise strategies that enable it to fulfill its task
irrespective of how the adversary proceeds.  This section presents the
algorithms for constructing and updating this model.

\subsection{Assumptions and Scope}

In the agent-environment game, the behavior of the unknown environment
becomes a positive presentation for the learner.  The hypothesis
obtained by the learner is used for the agent to recompute
the game automaton and the attractor as described in
Section~\ref{section:analysis}. It is therefore guaranteed that the agent's
hypothesis of the unknown environment will eventually converge to the
true abstract model of the environment,
provided that \begin{inparaenum}[(i)]
\item the true model lies within the class of models inferable by the
  learner from a positive presentation, and
\item the unknown environment's behavior suffices for a correct
  inference to be made (for example if a characteristic sample for
  the target language is observed). \end{inparaenum} 

We make the following assumption on the structure of the unknown
discrete dynamics of the adversarial environment:

\begin{assumption}
  The language admissible in the \ac{sa} $A_2$ of the adversarial
  environment (player 2) is identifiable in the limit from
  positive presentation.
\end{assumption}

Although the results we present extend to general classes of systems
generating string extension languages, for clarity of presentation we
will focus the remaining discussion on a particular subclass of string
extension languages, namely \emph{Strictly $k$-Local} languages
(\ac{sl}$_k$) \cite{Luca1980}, which has been defined in
Section~\ref{section:grinf-pre}.

\subsection{Identifying the Class of the Adversary's Behavior}
\label{subsec:characterizetheenv}

As suggested by Theorem \ref{thm:sl}, in order to identify the
behavior of the adversary, which is expressed in form of a language,
the agent must know whether this language is \ac{sl} and if it is, for
which $k$ in \ac{sl} hierarchy. We assume the information is provided
to the agent before the game starts. We employ the algorithm in
\cite{Caron1998} adapted for \ac{sa} to check whether a given
\ac{sa} admits a \ac{sl}
language.\footnote{This algorithm works with the graph representation
  of a \ac{fsa} and therefore it is not necessary to designate the
  initial states.} In what follows we provide a method for
determining the natural number $k$:

For some $k >0$, consider a (non)-canonical \ac{fsa} that accepts
$\Sigma^\ast$: $\mathcal{D}_k =\langle Q_D,\Sigma,T_D,\{\lambda\},
F_D \rangle$, where
\begin{inparaenum}[(i)]
\item $Q_D = \mathsf{Pr}^{\le k-1}(\Sigma^\ast)$;
\item $T_D(u,a)= \mathsf{Sf}^{= k-1}(ua)$ iff $|ua|\ge k-1$
and $ua$ otherwise;
\item $\lambda$ is the initial state, and \item $F_D=Q_D$ is the set
  of final states (all states are final).
\end{inparaenum}
We refer to $\mathcal{D}_k$ as the $\text{\ac{sl}}_k$-\ac{fsa}
  for $\Sigma^\ast$.  It is shown \cite{Heinz-thesis} that for a
given a \ac{sl}$_k$ language with grammar $\mathfrak{G}$, a
(non)-canonical \ac{fsa} accepting $L(\mathfrak{G})$ can be obtained
by removing some transitions and the finality of some of the
states\footnote{Removing finality of a state $q$ in \ac{fsa}
  $\mathcal{A}$ means to remove $q$ from the set of final states in
  $\mathcal{A}$.}  in $\mathcal{D}_k$. We call the \ac{fsa} of a
\ac{sl}$_k$ language $L(\mathfrak{G})$ obtained in this way, the
\ac{sl}$_k$-\ac{fsa} of $L(\mathfrak{G})$.  Figure~\ref{fig:sl3}
shows a \ac{sl}$_3$-\ac{fsa} for $\Sigma^\ast$, with $\Sigma=
\{a,b\}$.  Figure~\ref{fig:sl3fsa} shows another \ac{sl}$_3$ grammar
that generates the language given by the string extension grammar
$\mathfrak{G} = \{\rtimes aa, \rtimes ab, aab, aaa, aba,
ba\ltimes\}$. For example, $aaba \in L(\mathfrak{G})$ because
$\mathfrak{f}_{3}(\rtimes aaba \ltimes) = \{\rtimes aa, aab,
aba,ba\ltimes\} \subset \mathfrak{G}$. Yet $aababa \notin
L(\mathfrak{G})$ as $\mathfrak{f}_{3}(\rtimes aababa \ltimes) =
\{\rtimes aa, aab, aba,bab, ba\ltimes\} \nsubseteq \mathfrak{G}$, in
fact the $3$-factor $bab\notin \mathfrak{G}$.
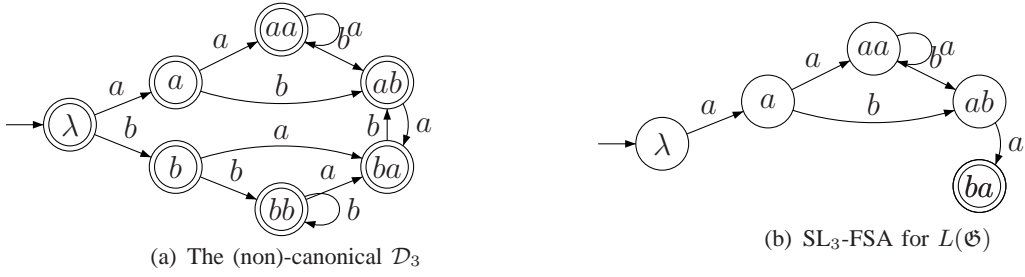
\begin{figure}[H]
 \centering
  \subfigure[The (non)-canonical $\mathcal{D}_3$]{\label{fig:sl3}
\begin{minipage}{.45\textwidth}
    \unitlength=4pt
    \begin{picture}(30, 20)(-6,4)
    \gasset{Nw=5,Nh=5,Nmr=2.5,curvedepth=0,loopdiam=3}
    \thinlines
    \node[Nmarks=ir,iangle=180,fangle=180](A0)(0,12){$\lambda$}
    \node[Nmarks=r](A1)(10,8){$b$}
    \node[Nmarks=r](A2)(10,16){$a$}
    \node[Nmarks=r](A3)(20,21){$aa$}
    \node[Nmarks=r](A4)(20,4){$bb$}
    \node[Nmarks=r](A5)(30,16){$ab$}
    \node[Nmarks=r](A6)(30,8){$ba$}
   \drawedge(A0,A1){$b$}
   \drawedge(A0,A2){$a$}
   \drawedge(A2,A3){$a$}
   \drawedge[curvedepth=-2](A2,A5){$b$}
   \drawedge(A1,A4){$b$}
   \drawedge[curvedepth=2](A1,A6){$a$}
   \drawloop[loopangle=0](A3){$a$}
   \drawloop[loopangle=0](A4){$b$}
   \drawedge[curvedepth=2](A5,A6){$a$}
   \drawedge(A6,A5){$b$}
   \drawedge(A4,A6){$a$}
   \drawedge(A3,A5){$b$}
  \end{picture}
  \vspace{0.2in}
\end{minipage}
\label{figure:D3}
}
\subfigure[\ac{sl}$_3$-\ac{fsa} for $L(\mathfrak{G})$]{\label{fig:sl3fsa}
\begin{minipage}{.45\textwidth}
  \unitlength=4pt
    \begin{picture}(30, 20)(-6,4)
    \gasset{Nw=5,Nh=5,Nmr=2.5,curvedepth=0,loopdiam=3}
    \thinlines
    \node[Nmarks=i,iangle=180,fangle=180](A0)(0,12){$\lambda$}
    \node(A2)(10,16){$a$}
    \node(A3)(20,21){$aa$}
    \node(A5)(30,16){$ab$}
    \node[Nmarks=r](A6)(30,8){$ba$}
    \node(A6)(30,8){$ba$}
   \drawedge(A0,A2){$a$}
   \drawedge(A2,A3){$a$}
   \drawedge[curvedepth=-2](A2,A5){$b$}
   \drawloop[loopangle=0](A3){$a$}
   \drawedge[curvedepth=2](A5,A6){$a$}
   \drawedge(A3,A5){$b$}
  \end{picture}
\end{minipage}
}
\caption{The (non)-canonical \ac{fsa} $\mathcal{D}_3$ accepting
  $\Sigma^\ast$ for $\Sigma = \{a,b\}$ (left) and the
  \ac{sl}$_3$-\ac{fsa} obtained for $L(\mathfrak{G})$, where
  $\mathfrak{G} = \{\rtimes aa, \rtimes ab, aab,aaa, aba,
  ba\ltimes\}$, after removing transitions and the finality of some
  states (right).}
\vspace*{-5ex}
\end{figure}

In a \ac{fsa}, we say $q\in Q$ is at level $i$ iff $i=\min \{|w|\mid
w\in \Sigma^\ast, T(q_0,w)=q\}$, where $q_0$ is an initial state. The
function $\gamma : Q\rightarrow \mathbb{N}$ maps a state $q$ to its
level.  Now we can state the following.
\begin{lemma}
  If a canonical \ac{fsa} $\mathcal{C}= \langle Q_c,\Sigma, T_c,q_{0c},
  F_c\rangle$ accepts a \ac{sl} language $L$ for some $k$ where $k$ is the
  smallest number such that $L(\mathcal{C})\in \text{\ac{sl}}_k$, then $k \le
  \max_{q\in F_c} \gamma(q) +1$.
\end{lemma}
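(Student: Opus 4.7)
The plan is to prove the equivalent statement that $L\in\text{\ac{sl}}_{N+1}$, where $N:=\max_{q\in F_c}\gamma(q)$; by the minimality of $k$ asserted in the hypothesis, this forces $k\le N+1$. I would take as a candidate the natural grammar $\mathfrak{G}:=\mathfrak{f}_{N+1}(\rtimes L\ltimes)$ and argue that $L_{\mathfrak{f}_{N+1}}(\mathfrak{G})=L$. The inclusion $L\subseteq L_{\mathfrak{f}_{N+1}}(\mathfrak{G})$ is immediate from the construction of $\mathfrak{G}$.

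For the reverse inclusion I would fix an arbitrary $w$ with $\mathfrak{f}_{N+1}(\rtimes w\ltimes)\subseteq\mathfrak{G}$ and show $w\in L$ by verifying that the trace of $w$ from $q_{0c}$ in $\mathcal{C}$ lands in $F_c$. The short regime $|w|\le N-1$ is handled by inspection: the only element of $\mathfrak{f}_{N+1}(\rtimes w\ltimes)$ is the boundary-decorated string $\rtimes w\ltimes$ itself, and since it carries both end markers it must coincide with $\rtimes u\ltimes$ for some witness $u\in L$, forcing $u=w$. In the long regime $|w|\ge N$, I would work along the chain of $(N+1)$-factors $f_0,f_1,\ldots$ of $\rtimes w\ltimes$; consecutive factors share exactly $N$ characters, and each $f_i$ is witnessed by some $u_i\in L$ containing $f_i$. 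A suitable splicing of the $u_i$'s should yield $w$ itself as a member of $L$.

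The central technical obstacle is that the standard suffix-substitution closure property of \ac{sl} languages---namely, if $L\in\text{\ac{sl}}_j$ then $u_1 x v_1,u_2 x v_2\in L$ with $|x|\ge j-1$ implies $u_1 x v_2\in L$---requires substitution overlaps of at least $j-1$, while the overlaps between consecutive witnesses $u_i,u_{i+1}$ are only of length $N$. \emph{A priori} we only know $L$ is \ac{sl} (for some $j$ possibly larger than $N+1$), so direct substitution need not apply. To overcome this I would exploit the hypothesis that every $q\in F_c$ satisfies $\gamma(q)\le N$ and prove the following sub-lemma: any two live prefixes $u_1,u_2$ with $\mathsf{Sf}^{=N}(u_1)=\mathsf{Sf}^{=N}(u_2)$ satisfy $T_c(q_{0c},u_1)=T_c(q_{0c},u_2)$. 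The idea is that the prefix portions of $u_1,u_2$ can be exchanged with shortest-path representatives of length at most $N$ that reach the relevant final states, after which \ac{sl}$_j$-substitution applies to the shortened words with overlaps that are now as long as needed; cascading these substitutions then composes into a Nerode equivalence between $u_1$ and $u_2$. This refinement of the Nerode relation is precisely the statement $L\in\text{\ac{sl}}_{N+1}$, and the hardest part of the argument is the bookkeeping that verifies the truncation-then-substitution cascade composes consistently across all the witness pairs produced by the $(N+1)$-factor chain.
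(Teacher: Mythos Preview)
Your proposal has a genuine gap: the sub-lemma you plan to prove is false, because the lemma itself as stated is false. Take $L=(ab)^*$ over $\Sigma=\{a,b\}$. The canonical \ac{fsa} has a single final state, namely the initial state $q_{0c}$, so $N=\max_{q\in F_c}\gamma(q)=0$; yet $(ab)^*\in\text{\ac{sl}}_2\setminus\text{\ac{sl}}_1$, so the minimal $k$ equals $2>N+1$. Your sub-lemma specialises here to the assertion that any two live prefixes sharing the same length-$0$ suffix (i.e.\ any two live prefixes whatsoever) are Nerode equivalent, but $\lambda$ and $a$ are live and inequivalent.

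The step that breaks is exactly the one you flag as ``hardest'': replacing the $u_i$ by shortest-path representatives of length $\le N$ reaching the relevant final states does \emph{not} make the overlap long enough for \ac{sl}$_j$ suffix-substitution. The shared block between the two witnesses is still of length $N$, and since the homomorphism from the \ac{sl}$_j$-\ac{fsa} onto $\mathcal{C}$ forces every reachable state of $\mathcal{C}$ to have level at most $j-1$, one always has $N\le j-1$; the overlap is therefore too short unless $N=j-1$ already. No bookkeeping can repair this, because the substitution step you need is simply unavailable at overlap length $N$.

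For comparison, the paper's argument is quite different---it never tries to show $L\in\text{\ac{sl}}_{N+1}$ directly, but instead reasons about where a single word $w$ of length $n$ lands in the \ac{sl}$_k$-\ac{fsa} $\mathcal{B}$. That argument also fails on $(ab)^*$: it writes $T_b(\lambda,w)=\mathsf{Sf}^{=k-1}(w)$, which presupposes $|w|\ge k-1$, the very inequality to be established. The remark right after the lemma that ``in the worst case this bound is $|Q_c|$'' suggests the intended quantity is $\max_{q\in Q_c}\gamma(q)$ (all reachable states) rather than $\max_{q\in F_c}\gamma(q)$; with that correction the $(ab)^*$ obstruction disappears, and your splicing strategy becomes viable since every live state, not just the final ones, then has a representative of length at most $N$.
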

\begin{proof}
  Let $\mathfrak{G}$ be a \ac{sl}$_k$ grammar that generates $L$.  
  Then we can generate a
  (non)-canonical \ac{fsa} $\mathcal{B} = \langle Q_b,\Sigma, T_b, \{\lambda\},
  F_b\rangle$ by removing transitions and finality of nodes from $\mathcal{D}_k$.
  Let $q^\ast = \arg \max_{q\in F_c} \gamma(q)$ be a state in $\mathcal{C}$
  furthest from the initial state, let $n = \gamma(q^\ast)$ be its level,
  and $w = w_1w_2\cdots w_n$ be a word that brings $\mathcal{C}$ to
  state $q^\ast = T(q_{0c},w)$.  \ac{fsa}s $\mathcal{B}$ and $\mathcal{C}$
  accept the same languages, so $w\in
  L(\mathcal{C})$ iff $w\in L(\mathcal{B})$.  In $\mathcal{B}$, however,
  we can compute a $k$, because   
  $T_b(\lambda,w) = \mathsf{Sf}^{=k-1}(w)\in F_b$
  with $k-1 \le n$, i.e.\ $k \le n+1$.
\end{proof}

Though we can only obtain an upper bound $k_{\max}=\max_{q\in F_c}
\gamma(q) +1$ on the smallest $k$ (in the worst case this bound is
$|Q_c|$), the hierarchy of \ac{sl} language class given by
Theorem~\ref{thm:hierarchy} guarantees that this upper bound
$k_{\max}$ is sufficient for us to obtain a correct
\ac{sl}$_{k_{\max}}$ grammar that generates the exact language
presented to the learner, irrespectively if this language can also be 
generated by a \ac{sl}$_k$ grammar for some $k \le k_{\max}$. For example, for the
language accepted by the \ac{fsa} in Fig.~\ref{fig:sl3fsa}, we can
also obtain a \ac{sl}$_4$ grammar $\mathfrak{G}'= \left\{\rtimes a a
  a, \rtimes a ba, \rtimes aab, aaba, aaab, aba\ltimes\right\}$ and it
can be verified that $L(\mathfrak{G}') = L(\mathfrak{G})$.

\subsection{Learning the Adversary's Dynamics}
\label{section:learning-the-sw}
Before the game starts, player 1 is informed that the behavior of its
adversary is a \ac{sl}$_k$ language for some known $k$ and the
adversary can always give up a turn, i.e. $\epsilon \in \Sigma_2$.
With this knowledge, player 1 builds a \ac{sl}$_k$-\ac{fsa} for
$\left\{\Sigma_2\setminus \{ \epsilon\} \right\}^\ast$. Then, by
unmarking initial and final states and adding a self-loop labeled
$\epsilon$ at each state, it obtains an initial model of its adversary
$A_2^{(0)} = \langle Q_2,\Sigma_2 ,T_2\rangle$.

In the course of game, player 1 (agent) records the continuous
sequence of actions of player 2 (the environment). This amounts to a
presentation $\phi$ of the form: $ \phi(0)= \lambda, \,
\phi(i+1)=\phi(i)\sigma,\, i\ge 1, i\in \mathbb{N},$ for some $\sigma
\in \Gamma\big(T(q_{0},w) \big)\cap \Sigma_2 \ne \emptyset$ where $q_0
\in Q_0$ and $w\!\downharpoonright_{\Sigma_2}\; =
\phi(i)$.\footnote{This is a map $\downharpoonright_{\Sigma_2} :
  \Sigma^\ast \rightarrow \Sigma_2^\ast $. The image
  $w\downharpoonright_{\Sigma_2}$ is the string after removing all
  symbols in $w$ which are not in $\Sigma_2$.} The learning algorithm
is applied by player 1 to generate and refine the hypothesized model
of its adversary from the presentation $\phi$.

Since a \ac{fsa} for any \ac{sl}$_k$ grammar can be generated by
removing edges and finality of nodes in the \ac{sl}$_k$-\ac{fsa} for
$\Sigma^\ast$, then the \ac{sa} for player 2 can be obtained by just
removing edges in $A_2^{(0)}$. Due to this special property, we can
use an instrument with which the agent encodes new knowledge into the
hypothesized model for the adversary, namely, a \emph{switching
  function} $\mathrm{sw}$, which operates on a \ac{sa} (or \ac{fsa})
and either blocks or allows certain transitions to take place:
$\mathrm{sw}: Q_2\times \Sigma_2 \to \{0,1\}$, so that for $q \in
Q_2$, $\sigma \in \Gamma(q)$ only if $\mathrm{sw}(q,\sigma) = 1$.
Consequently, at round $i+1$, the incorporation of new knowledge for
$A_2$ obtained at round $i$ redefines
$\mathrm{sw}$. We assume a naive agent that starts its interaction
with the environment believing that the latter is static (has no
dynamics). That hypothesis corresponds to having
$\mathrm{sw}^{(0)}(q,\sigma) = 0$, $\forall \sigma \in
\Sigma_2\setminus \{\epsilon\}$ and
$\mathrm{sw}^{(0)}(q,\epsilon)=1,\forall q\in Q_2$.


Note that $\phi(i)$ denotes the presentation up to round $i$. The
initialization of the game can be considered as a single round played
blindly by both players (without any strategy). Hence, if the game
starts with $\big((q_1,q_2,\bm 1) ,q_{0s}\big)$, it is equivalent to
have $\phi(1)= \sigma$, for which $T_2(\lambda,\sigma)=q_2$. Let
$\mathrm{sw}^{(i)}$ denote the refinement of $\mathrm{sw}$ made at
round $i$, suppose that at round $i+1$, the adversary plays
$\sigma'$. This suggests $\phi(i+1) = \phi(i) \sigma'$. Suppose $q_2 =
T_2(\lambda,\phi(i))$, then for all $q\in Q_2$ and
$\sigma\in\Sigma_2$, $\mathrm{sw}^{(i+1)}$ is defined by
\begin{equation}
\label{eq:updatesw}
\mathrm{sw}^{(i+1)}(q,\sigma) = \begin{cases} \mathrm{sw}^{(i)}(q,\sigma)
&\text{if } (q,\sigma) \neq (q_2,\sigma') \\
1 & \text{if } (q,\sigma) = (q_2,\sigma')
\end{cases}
\end{equation}
meaning that the transition from $q_2$ on input $\sigma'$ in $A_2$ is
now enabled. With a small abuse of notation, we denote the pair
$\left(A_2^{(0)}, \mathrm{sw}^{(i)}\right) = A_2^{(i)}$, read as the
\ac{sa} $A_2^{(0)}$ with switching function
$\mathrm{sw}^{(i)}$. Pictorially, $A_2^{(i)}$ is the \ac{sa} obtained
from $A_2^{(0)}$ by trimming the set of transitions which are switched
off ($\mathrm{sw}(\cdot)=0$).

Correspondingly, the game automaton in the initial theory of mind of
the agent is constructed as $\mathcal{G}^{(0)} =\langle
\mathcal{P}^{(0)} \times \mathcal{A}_s\rangle$ where
$\mathcal{P}^{(0)}$ is the \ac{fsa} obtained by $P^{(0)}= A_1 \circ
A_2^{(0)}$ after setting $ I_1 \times I_2 \times \{\bm 1\}$ as the set
of legitimate initial states, where $I_2=\{q\mid
T_2(\lambda,\sigma)=q,\sigma\in \Sigma_2\setminus \{\epsilon\}\}$, and
all other states in $P^{(0)}$ as final. By the construction of game,
the switching function associated with $A_2^{(i)}$ can be extended
naturally to $\mathcal{G}^{(i)} =
\left(\mathcal{G}^{(0)},\mathrm{sw}^{(i)} \right)$ by:
\begin{equation}
\label{eq:extendsw}
\enspace \forall q=(q_1,q_2,\bm{0},q_s), \sigma \in \Sigma_2,
\mbox{sw}^{(i)} (q,\sigma) = 1 \text{ (or } 0) 
\text{ in $\mathcal{G}^{(i)}$ iff } \mbox{sw}^{(i)}
(q_2,\sigma) = 1 \text{ (or } 0) \text{ in } A_2^{(i)} .
\end{equation}

With the extension of switching function, one is able to update the
game automaton without computing \emph{any} product during
runtime. This is because the structure of the game has essentially
been pre-compiled. This results in significant computational savings
during runtime, depending on the size of $A_2^{(0)}$.

This switching mechanism along with the extension from $A_2^{(i)}$ to
$\mathcal{G}^{(i)}$ can be applied to other classes of string
extension languages, in particular any class of languages describable
with \ac{fsa}s obtainable by removing edges and finality of states
from some deterministic \ac{fsa} accepting $\Sigma^\ast$.

\subsection{Symbolic Planning and Control}

With the theory of mind as developed in round $i$, and with the game
automaton at state $q$, the agent computes an optimal winning strategy
$\mathsf{WS}_1^\ast$ based on \eqref{eq:shortestws}, by setting $W_0 =
V_0 = F$ and iteratively evaluating \eqref{eq:attractor}, where
$\mathrm{sw}^{(i)}$ defined in $\mathcal{G}^{(i)}$ has to be taken
account of: for all $(q,\sigma) \in Q\times \Sigma$, if
$\mathrm{sw}^{(i)}(q,\sigma)=0$, then $\sigma \notin \Gamma(q)$.  The
computation terminates when the following condition is satisfied:
\begin{align} \label{eq:termination}
\exists \, m \in \mathbb{N} :&  & q \in W_m & & \lor & &
q \notin W_m = W_{m+1} \enspace.
\end{align}

When $q \in W_m$, $\mathsf{WS}_1^\ast$ can be computed at $q$.  Then
based on Proposition~\ref{prop:optimal-moves}, the strategy ensures
victory in at most $m$ turns.  The agent implements this strategy as
long as its theory of mind for the adversary remains valid, in other
words, no new transition has been switched on.  In the absence of new
information, the plan computed is optimal and there is no need for
adjustment.  If in the course of the game an action of the adversary,
which the current model cannot predict, is observed, then that model
is refined as described in Section~\ref{section:learning-the-sw}.
Once the new game automaton is available,
\eqref{eq:attractor}-\eqref{eq:shortestws} are recomputed, and
\eqref{eq:termination} is satisfied.

If instead $q \notin W_m = W_{m+1}$, then the agent thinks that $q \in
\mathsf{Attr}(F)^c$: the agent is in the trap of its adversary.  If
the adversary plays its best, the game is lost.  It should be noted
that this attractor is computed on the hypothesized game and may not be
the true attractor. Assuming that the adversary will indeed play
optimally, the agent loses its confidence in winning and resigns.
In our implementation, when the agent resigns the game is restarted at
a random initial state $q_0\in Q_0$, but with the agent
\emph{retaining} the knowledge it has previously obtained about its
adversary. The guaranteed asymptotic convergence of a string extension
learner ensures that in each subsequent game, the agent increases its
chances of winning when initialized at configurations from which
winning strategies exist.  The adversary can always choose to prevent
the agent from learning by not providing new information, but by doing
so it compromises its own strategy.

The following section illustrates how the methodology outlined can be
implemented on a simple case study, and demonstrates the effectiveness
of the combination of planning with string extension learning.  As it
turns out, the identification of the adversary's dynamics is quite
efficient in relation to the size of $A_2$.


\section{Refinement on Hybrid Dynamics}
\label{refine}

Section~\ref{section:GI} established a methodology based on which the agent 
can concurrently learn and (re)plan an optimal strategy for achieving its objective,
in a partially known and adversarial environment.  This section addresses the problem
of implementing the optimal strategy on the concrete dynamics of the hybrid
agent $H_a$ as given in Definition~\ref{hybrid-agent}.

\begin{proposition}
\label{prop:pair}
Every transition labeled with $\tau \in \Sigma \setminus \Sigma_a$
must be followed by a transition labeled with some $\sigma \in \Sigma_a$,
i.e., every silent transition in $A(H_a)$ must be followed by an observable one.
\end{proposition}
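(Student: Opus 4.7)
The claim is structural: whenever $A(H_a)$ contains a silent transition $q \stackrel{\tau}{\rightarrow} q'$, there must exist some $\sigma' \in \Sigma_a$ and $q'' \in Q$ with $q' \stackrel{\sigma'}{\rightarrow} q''$ also in $A(H_a)$. My plan is to unfold case~(2) of Definition~\ref{def:its}, which characterizes silent transitions in terms of a parameter reset that enables the precondition of some discrete action, and then to invoke the attractor property of the corresponding vector field $f_{\sigma'}$ from Definition~\ref{hybrid-agent} to produce the observable successor transition.

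First I would fix an arbitrary silent transition $q \stackrel{\tau}{\rightarrow} q'$ with $\tau \in \Sigma \setminus \Sigma_a$. By case~(2) of Definition~\ref{def:its}, there exists a parameter $p$ such that for every $z$ with $V_M(z,p)=q$, one can find $p' \in s(z,p)$ and $\sigma' \in \Sigma_a$ satisfying $V_M(z,p')=q'$ and $(z,p') \models \text{\textsc{Pre}}(\sigma')$. Hence immediately after the parameter reset the hybrid agent resides in a configuration at which $\sigma'$ is enabled. Because $\mathcal{X}$ is positively invariant under $f_{\sigma'}(\cdot,p')$ and this vector field possesses an attractive compact limit set $L^+(p',\sigma')$ (Definition~\ref{hybrid-agent}), the continuous trajectory issuing from $(z,p')$ eventually enters the $\varepsilon$-neighborhood of $L^+(p',\sigma')$; at such a point $z'$ we have $(z',p') \models \text{\textsc{Post}}(\sigma')$. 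Setting $q'' := V_M(z',p')$, the parameter $p'$ then witnesses the observable transition $q' \stackrel{\sigma'}{\rightarrow} q''$ demanded by case~(1) of Definition~\ref{def:its}.

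The main obstacle is that case~(1) quantifies universally over all $z^\ast$ with $V_M(z^\ast, p')=q'$ and all $z'^\ast$ with $(z'^\ast, p') \models \text{\textsc{Post}}(\sigma')$, whereas the silent transition directly supplies these conditions only along the specific trajectory produced by the reset. I would reconcile this by leveraging the universal quantifier over $z$ at $q$ already present in case~(2), which forces the pair $(p',\sigma')$ to serve uniformly across every composite state projecting to $q$ under $V_M(\cdot, p)$; the image of this set under the reset map $s$ then provides exactly the composite states at $q'$ on which case~(1) needs to be certified. If a strict syntactic match with case~(1) is demanded instead, one can either refine $q'$ to the reachable subset of valuations or appeal to an implicit compatibility assumption between $V_M$, $s$, and the predicate families; in either reading the required observable successor transition exists.
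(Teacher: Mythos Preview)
Your argument is correct and rests on the same mechanism as the paper's proof: a silent transition, by case~(2) of Definition~\ref{def:its}, lands the hybrid agent in a configuration where some $\sigma'\in\Sigma_a$ has its precondition satisfied, and the attractor property of $f_{\sigma'}$ then carries the state to a point satisfying $\text{\textsc{Post}}(\sigma')$, yielding the required observable successor via case~(1). The paper packages this same computation inside a contradiction argument---it assumes $\tau$ sits between two observable transitions, posits a second silent $\tau'$ immediately after $\tau$, and then derives exactly your observable $\sigma'$-transition to rule $\tau'$ out---so your direct presentation is in fact a cleaner version of the same proof; your explicit acknowledgement of the universal-quantifier mismatch between cases~(1) and~(2) is a point the paper's proof glosses over as well.
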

\begin{proof}
  Assume, without loss of generality that the $\tau$ transition
  appears somewhere between two observable transitions $\sigma_1$,
  $\sigma_2$ $\in \Sigma_a$.  We will show that $\tau$ is the only
  silent transition that can ``fit'' between $\sigma_1$ and
  $\sigma_2$, in other words we can only have $q
  \stackrel{\sigma_1}{\to} q_1 \stackrel{\tau}{\to} q_2
  \stackrel{\sigma_2}{\to} q'$ for some $q$, $q_1$, $q_2$, and $q'$
  $\in Q$.  For that, note that by definition, $q$ must be such that
  for all $(z,p)$ giving $V_M(z,p) = q$, $(z,p) \models
  \text{\textsc{Pre}}(\sigma_1)$; similarly $q_1$ must be such that
  for all $(z',p)$ giving $V_M(z',p) = q_1$ we should have $(z',p)
  \models \text{\textsc{Post}}(\sigma_1)$.  Now suppose that there is
  another silent transition $\tau'$, in addition to $\tau$ between
  $\sigma_1$ and $\sigma_2$ and for the sake of argument assume that
  it comes right after $\tau$: $q \stackrel{\sigma_1}{\to} q_1
  \stackrel{\tau}{\to} q'' \stackrel{\tau'}{\to} q''' \cdots q_2
  \stackrel{\sigma_2}{\to} q'$.  With the $\tau$ transition following
  $\sigma_1$ we have by definition that there exists a $p'$ such that
  once the $\tau$ transition is completed it is $ (z',p') \models
  \text{\textsc{Pre}}(\sigma')$ for some $\sigma'$ $\in \Sigma_a$.
  Since $(z',p) \models \text{\textsc{Post}}(\sigma_1)$ and $ (z',p')
  \models \text{\textsc{Pre}}(\sigma')$, we have by
  Definition~\ref{hybrid-agent} that $H_a$ makes a transition from
  $(z',p',\sigma_1)$ to $(z',p',\sigma')$, and then the continuous
  component dynamics $f_{\sigma'}$ is activated yielding $z'
  \stackrel{\sigma'[p']}{\hookrightarrow} z''$ for some $(z'',p')
  \models \text{\textsc{Post}}(\sigma')$.  This time, with $ (z',p')
  \models \text{\textsc{Pre}}(\sigma')$ and $(z'',p') \models
  \text{\textsc{Post}}(\sigma')$, it follows that there is a $\sigma'$
  transition in $A(H_a)$ taking $q'' \stackrel{\sigma'}{\to} q'$, and
  $\sigma' = \sigma_2$ because there cannot be more than two
  observable transitions between $q$ and $q'$ by assumption.
  Therefore, $\tau$ is the only silent transition that must have
  occurred while $A(H_a)$ moved from $q$ to $q'$.
  \end{proof}

Due to Proposition~\ref{prop:pair}, without loss of generality we will
assume that a composite transition consists of a silent transition followed
by an observable transition, $q \stackrel{\sigma}{\leadsto} q'$ 
$\Longleftrightarrow$ $q \stackrel{\tau}{\to} q'' \stackrel{\sigma}{\to} q'$. 

\begin{theorem}
\label{thm:simulation}
Let $\Sigma_\epsilon = \Sigma \setminus \Sigma_a$, the hybrid agent
$H_a$ weakly simulates its induced semiautomaton $A(H_a)$ ($H_a
\gtrsim A(H_a)$) in the sense that there exists an ordered total
binary relation $\mathfrak{R}$ such that whenever $(q,z) \in
\mathfrak{R}$ and $q \stackrel{\sigma}{\leadsto}q'$ for some $q' \in
Q$, then $\exists z'\in \mathcal{Z}:
z\stackrel{\sigma[p]}{\hookrightarrow}z'$ such that $ (q',z')\in
\mathfrak{R}$.
\end{theorem}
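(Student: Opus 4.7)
The plan is to exhibit a concrete candidate relation and verify the two defining properties of weak simulation in turn. Define
\[
\mathfrak{R} := \bigl\{(q,z)\in Q\times \mathcal{Z} \;\bigl|\; \exists\, p\in \mathcal{P}\text{ with } V_M(z,p)=q\bigr\}.
\]
Totality of $\mathfrak{R}$ in the direction required by the theorem is immediate: by Definition~\ref{def:its}, every $q\in Q$ is of the form $V_M(z,p)$ for some $(z,p)\in\mathcal{Z}\times\mathcal{P}$, so for any such $q$ we may pick a witness $z$ and obtain $(q,z)\in\mathfrak R$.

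For the simulation property, suppose $(q,z)\in\mathfrak R$ and $q\stackrel{\sigma}{\leadsto}q'$ in $A(H_a)$. First, I would invoke Proposition~\ref{prop:pair} to rewrite this composite transition in the canonical form $q\stackrel{\tau}{\to}q''\stackrel{\sigma}{\to}q'$ with $\tau\in\Sigma\setminus\Sigma_a$ and $\sigma\in\Sigma_a$. Then I would unpack the two conjuncts of Definition~\ref{def:its}. Using clause~(2) applied to $q\stackrel{\tau}{\to}q''$, there exist a parameter $p'\in s(z,p)$ and a mode $\sigma\in\Sigma_a$ such that $V_M(z,p')=q''$ and $(z,p')\models\text{\textsc{Pre}}(\sigma)$; this matches exactly the conditions required by Definition~\ref{hybrid-agent} for $H_a$ to perform the discrete transition $(z,p,\sigma_{\mathrm{prev}})\to(z,p',\sigma)$ with the reset map. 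In particular, $(q'',z)\in\mathfrak R$ via the witness $p'$.

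Next, applying clause~(1) of Definition~\ref{def:its} to $q''\stackrel{\sigma}{\to}q'$, there is some $p''\in\mathcal P$ such that for every $z$ giving $V_M(z,p'')=q''$ and every $z'$ giving $(z',p'')\models\text{\textsc{Post}}(\sigma)$ one has $(z,p'')\models\text{\textsc{Pre}}(\sigma)$ and $V_M(z',p'')=q'$. Because the composite transition we started from pairs $\tau$ with $\sigma$ consistently, we can arrange $p''=p'$. By the hybrid-agent dynamics, once $\sigma$ is activated at $(z,p')$, the continuous flow $f_\sigma(\cdot,p')$ drives $z$ into an $\varepsilon$-neighborhood of its limit set $L^+(p',\sigma)$, producing some $z'$ with $(z',p')\models\text{\textsc{Post}}(\sigma)$; that is, $z\stackrel{\sigma[p']}{\hookrightarrow}z'$. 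Since $V_M(z',p')=q'$, we conclude $(q',z')\in\mathfrak R$, which completes the verification.

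The main technical subtlety I expect is the bookkeeping of the parameter $p$ across the silent/observable pair: the reset in the silent transition produces a candidate $p'$, and one must make sure that the existential $p''$ chosen in clause~(1) of Definition~\ref{def:its} can be taken to coincide with $p'$ so that the single continuous flow $f_\sigma(\cdot,p')$ can realize the observable transition out of $q''$. Everything else is essentially translating between the valuation-map semantics in Definition~\ref{def:its} and the \textsc{Pre}/\textsc{Post} semantics of Definition~\ref{hybrid-agent}.
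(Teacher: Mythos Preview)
Your proposal is correct and follows essentially the same approach as the paper: define $\mathfrak{R}$ via the valuation map, decompose the composite transition using Proposition~\ref{prop:pair} into a silent step followed by an observable one, and then translate clauses~(1) and~(2) of Definition~\ref{def:its} back into the \textsc{Pre}/\textsc{Post} semantics of Definition~\ref{hybrid-agent} to produce the matching evolution $z\stackrel{\sigma[p]}{\hookrightarrow}z'$. Your treatment is in fact slightly more explicit than the paper's (you spell out totality and flag the parameter-alignment issue $p''=p'$ that the paper resolves informally by appeal to Proposition~\ref{prop:pair}).
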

\begin{proof}
  If $(q,z) \in \mathfrak{R}$, then there exists $p^0 \in \mathcal{P}$
  such that $V_M(z,p^0) = q$.  In general, $p^0 \neq p$.  Using the
  convention adopted above for the composite transition, we write
  $q
    \stackrel{\sigma}{\leadsto} q'$ $\Longleftrightarrow$ $q
  \stackrel{\tau}{\to} q'' \stackrel{\sigma}{\to} q'$ with $\sigma \in
  \Sigma_a$ and $\tau \in \Sigma\setminus \Sigma_a$.  The transition
  $q \stackrel{\tau}{\to} q''$, by definition, implies that for all
  $z$ such that $V_M(z,p^0) = q$, there exists $p\in s(z,p^0)$ and
  $\sigma' \in \Sigma_a$ such that $V_M(z,p) = q''$ with $(z,p)
  \models \text{\textsc{Pre}}(\sigma')$.  With $q''
  \stackrel{\sigma}{\to} q'$ assumed, we have by definition that for
  all $z$ such that $V_M(z,p) = q''$ it should be $V_M(z',p) = q'$ for
  all $z'$ satisfying $(z',p) \models \text{\textsc{Post}}(\sigma)$.
  (Note that this is the same $p \in s(z,p^0)$ that appeared before,
  because there can only be one silent transition before an observable
  one and only silent transitions change the parameters.)  From
  Definition~\ref{hybrid-agent} we then have that $z
  \stackrel{\sigma[p]}{\hookrightarrow} z'$, and $(z',q') \in
  \mathfrak{R}$ because $V_M(z',p) = q'$.
\end{proof}

We have thus shown that whatever sequence of labels is observed in a
run of $A(H_a)$, a succession of continuous component dynamics with this same
sequence of subscript indices
can be activated in $H_a$.  Thus, whatever strategy is devised in $A(H_a)$,
has a guaranteed implementation in the concrete dynamics of the hybrid
agent.  The issue of selecting the parameters so that the implementation
is realized is not treated here.  This subject is addressed, using slightly different
discrete models, in \cite{acc12}.


\section{Case Study}
\label{section:example}

\subsection{Experimental Setup}

To demonstrate the efficacy of our methodology, we consider a game,
played between a robot and an intelligent adversary.  The purpose of
the robot (hybrid agent) is to visit all four rooms in the triangular
``apartment'' configuration of Fig.~\ref{fig:game}.  The four rooms in
this triangular apartment are connected through six doors, which an
intelligent adversary can close almost at will, trying to prevent the
robot from achieving its goal. Table~\ref{tab:rules} shows three
possible rule regimes that the adversary could use. Initially the robot is
capable of distinguishing closed from open doors, but it does not know
which doors can be closed simultaneously. In fact, it assumes that
only the initially closed doors are ones that can be closed.

\begin{table}[h!]\normalsize
\centering
\begin{tabular}{ l  l }
  \toprule
  Rules & Description \\
  \midrule
  \texttt{Opposite} & Only one pair of doors opposite to each other can be closed at any time:
  \\
  & $\{a,d\},\, \{a,e\},\, \{a,f\},\, \{b,f\},\,\{c,e\},\, \{e,f\}$  \\
  \addlinespace[5pt]
  \texttt{Adjacent} & Only one pair of doors adjacent to each other can be closed at any time:
  \\
  & $\{a,b\},\, \{a,c\},\, \{b,c\},\, \{b,d\},\, \{b,e\},\, \{c,d\},\,
  \{c,f\},\, \{d,e\},\, \{d,f\}$ \\ \addlinespace[5pt]
  \texttt{General}  & Any pair of doors can be closed at any time.                             \\
  \bottomrule
\end{tabular}
\caption{Some possible rules for the adversary (controlling the doors):  at each round,
  the environment either keeps static or opens exactly one door in the
  closed pair of doors and closes exactly one, which results in
  another pair of closed doors.\label{tab:rules}}
\vspace{-5ex}
\end{table}

\begin{figure}[h!]
\centering
\subfigure[The triangle room game representation.]{
     \label{fig:sketch}
    \includegraphics[width=0.4\textwidth]{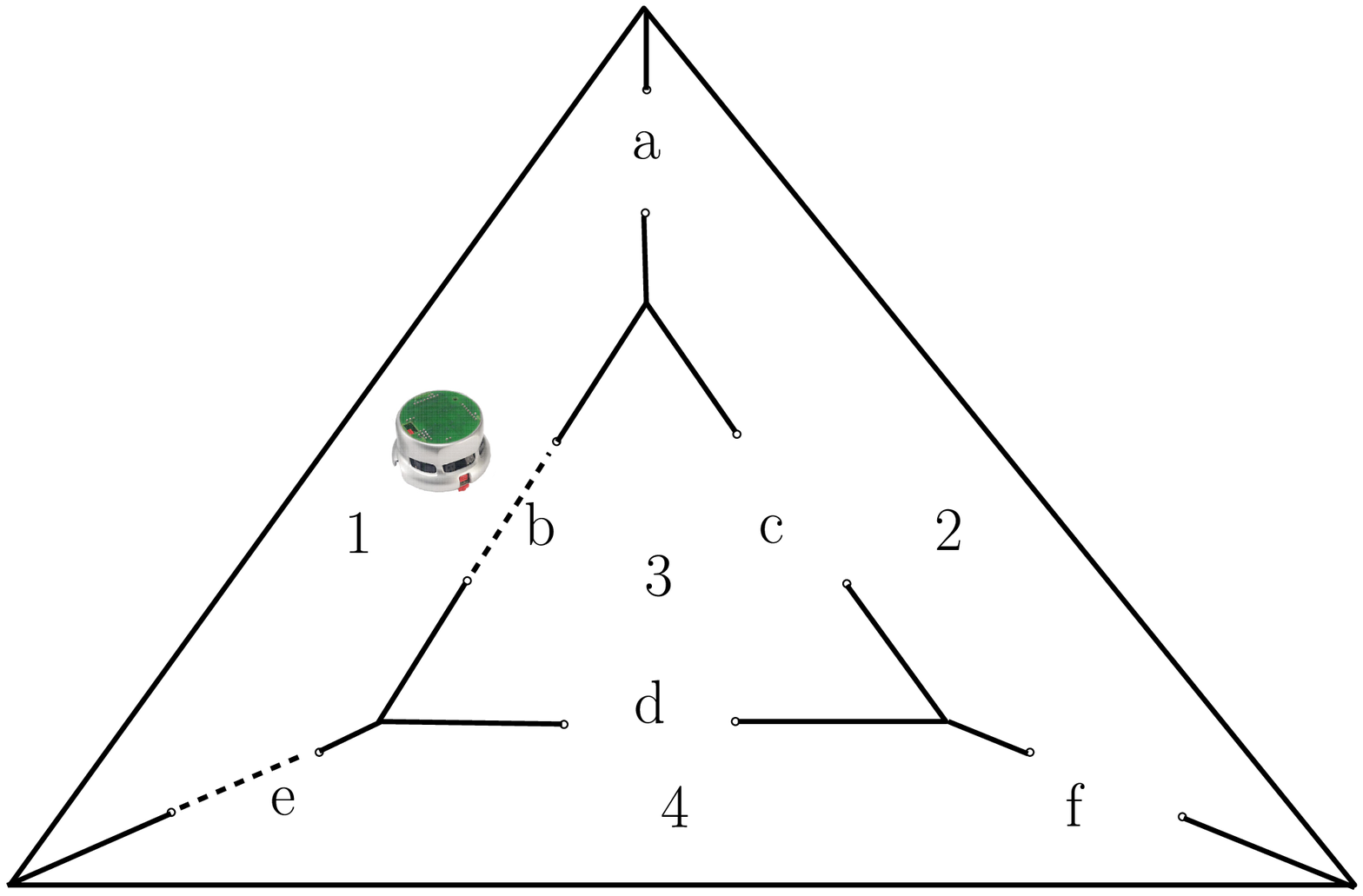}}
\subfigure[A physical implementation of the game.]{
     \label{fig:testbed}
     \includegraphics[width=0.4\textwidth]{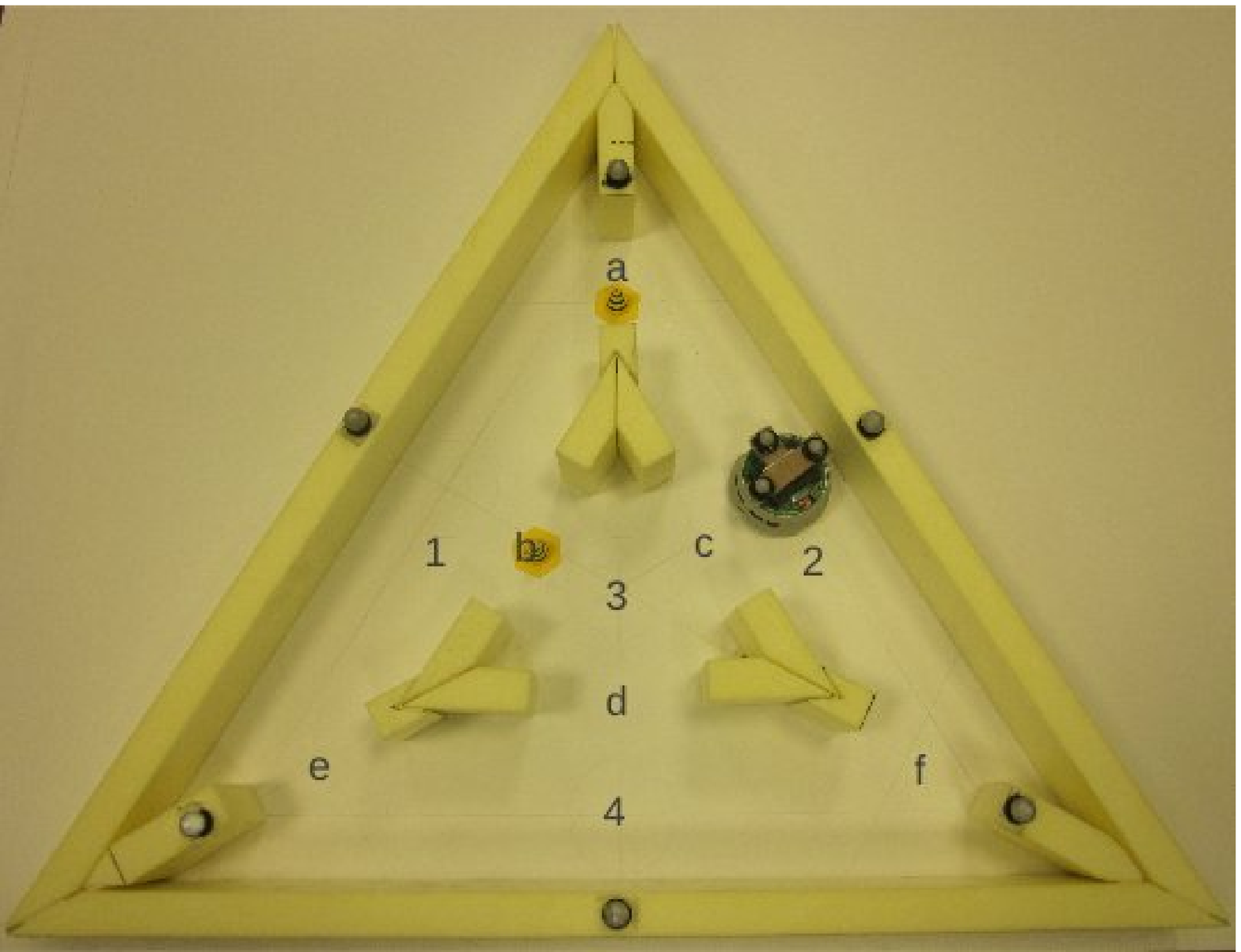}}
\hfill
\caption{The non-cooperative game used in this case study.
  Figure~\ref{fig:sketch} is a graphical depiction of the triangular
  apartment game, while Fig.~\ref{fig:testbed} shows a physical
  realization of the scenario, with a \texttt{Khepera II} miniature
  mobile robot in the role of the hybrid agent.  The robot localizes
  itself and observes which doors are closed (door closure implemented
  manually using the yellow caution cones) through a
  \textsc{vicon}\texttrademark~motion capture system.  The grammatical
  inference module and the strategy computation algorithm have been
  implemented in python, which communicates with the control for the
  robot (through Matlab{\texttrademark}) over a serial
  link.\label{fig:game}}
\vspace*{-5ex}
\end{figure}

The \texttt{Khepera II}, manufactured by K-Team Inc., is a
differential-drive mobile robot, with two actuated wheels and
kinematics that are accurately represented by the equations of a
unicycle. Motion control is achieved through \textsc{pid} loops that
independently control either angular displacement or speed of the two
wheels.  These \textsc{pid} loops can support the development of
mid-level motion planning controllers.  For example, input-output
feedback linearization of the unicycle dynamics \cite{vijay-korea}
leads to a fully actuated reduced system of the form $\dot{q} = u$,
where the sequential composition flow-through approach of
\cite{conner} can be applied to produce controllers that steer the
robot from room $i$ to a neighboring room $j$.  This same approach has
been used in \cite{waldo} to generate discrete abstractions for the
purpose of finding Waldo; details on how the sequential composition
approach can give rise to finite state automata abstractions are found
in \cite{conner-phd}.

For the case at hand, we can use the flow-through strategies 
to generate potential field-based velocity controllers to realize
transitions from room $i$ to room $j$ in a way compatible to the
requirements on the continuous dynamics of the hybrid agent of
Definition~\ref{hybrid-agent}, that is, ensure that $\text{\textsc{Pre}}(\sigma)$
is positively invariant for $f_\sigma$, 
and that trajectories converge to 
$L^+(p,\sigma) \oplus \mathcal{B}_\varepsilon$ in finite time (see \cite{conner-phd}).
The latter set is in fact the formula for $\text{\textsc{Post}}(\sigma)$:
$x \in L^+(p,\sigma) \oplus \mathcal{B}_\varepsilon$.

In the context of the flow-through navigation strategy of
\cite{conner}, a transition from, say, room 1 to room 2 (see
Fig.~\ref{fig:game}) would involve a \emph{flow-through vector field}
\cite{conner} by which the robot exits the polygon outlining room 1
from the edge corresponding to door $a$ (slightly more sophisticated
behavior can be produced by concatenating the flow-through policy with
a \emph{convergent} \cite{conner} one that ``centers'' the robot in
room 2.)

The hybrid agent that is obtained by equipping the robot with these
flow-through policies can be defined as a tuple
$H_a= \langle { \mathcal{Z}, \Sigma_a, \iota, \mathcal{P}, \pi_i, 
	\mathcal{AP}, f_\sigma, \text{\textsc{Pre}} ,\text{\textsc{Post}}, s, T_a} \rangle$ where
\begin{itemize}
\item $\mathcal{Z}$ is the triangular sector of $\mathbb{R}^2$ consisted of the
union of the areas of the four rooms.
\item $\Sigma_a = \{(1,2),\, (1,3), \,(1,4), \,(2,1), \,(2,3), \,(2,4), \,(3,1), \,(3,2), 
\,(3,4), \,(4,1), \,(4,2), \,(4,3)\}$, with each element associated with a single flow-through
policy: $(i,j)$ denotes a flow-through policy from room $i$ to room $j$.  
\item $\iota : \Sigma_a \to \{1,\, 2,\, 3,\, 4\}$ where we slightly abuse notation and
define $\iota$ not as a bijection but rather a surjection, where we abstract away the
room of origin and we maintain the destination, for simplicity.
\item $\pi_i = \pi = I$ (the identity), $\mathcal{P} = \mathcal{Z}$, and
$s(z,p) = \mathcal{P}$, $\forall (z,p) \in \mathcal{Z} \times \mathcal{P}$; 
in this case we do not have to use parameters
explicitly---they are hard-wired in the flow-through policies.
\item $\mathcal{AP}=$ $\{\alpha_i: \text{ robot in room } i\}$, $i=1,2,3,4$.
\item $f_\sigma = K (X_\sigma - \dot{q})$, $K >0$, a simple proportional controller on velocity
intended to align the system's vector field with the flow-through field $X_\sigma$.
\item $\text{\textsc{Pre}}\big( (i,\cdot) \big) = \alpha_i$, $i \in
  \{1,\ldots,4\}$ and $\text{\textsc{Post}}\big( (\cdot,j) \big) =
  \alpha_j$, $j \in \{1,\ldots,4\}$.
\item $T_a$ following Definition~\ref{hybrid-agent}, once all other components 
are defined.
\end{itemize}

One can verify by inspection when constructing $A(H_a)$, that the
first element of $\sigma = (i,j)$ is encoded in the label for the
discrete state, $\alpha_i$, from which the transition $\alpha_i
\stackrel{(i,j)}{\to} \alpha_j$.  Thus, to simplify notation, we
change the label of a state from $\alpha_i$ to $i$, and the label of
the transition from $(i,j)$ to just $j$---the destination state.  We
write $i \stackrel{j}{\to} j$ instead.  Figure~\ref{fig:SAs} (left)
gives a graphical representation of $A(H_a)$ after the
state/transition relabeling, basically expressing the fact that with
all doors open, the robot can move from any room to any other room by
initiating the appropriate flow-through policy.

\subsection{Results}

Suppose the adversarial environment adheres to the \texttt{Opposite}
 rule in Table~\ref{tab:rules}.  The
\ac{sa} $A_1$ for the agent (player 1) and a fragment of \ac{sa} $A_2$
modeling the environment (player 2) are shown in
Fig.~\ref{fig:SAs}.\footnote{SAs $A_1$ and $A_2$ happen to be Myhill graphs,
but the analysis presented applies to general \ac{sa}s.}
By assigning $I_1 = Q_1$ and $I_2=Q_2$, the game
can start with any state in $Q_1\times Q_2\times \{\bm 1\}$.

\begin{figure}[h]
  \begin{center}
    \begin{minipage}{.3\textwidth}
      \unitlength=4pt
      \begin{picture}(20, 20)(0,-2)
        \gasset{Nw=5,Nh=5,Nmr=2.5,ELside=r,curvedepth=0}
        \thinlines
        \node(A0)(0,13){$1$} 
        \node(A1)(13,13){$2$}
        \node(A2)(13,0){$3$} 
        \node(A3)(0,0){$4$} 
        \drawedge[ELside=l](A0,A1){$2$}
        \drawedge[curvedepth=-3](A1,A0){$1$}
        \drawedge[ELpos=25,curvedepth=-1](A2,A0){$1$}
        \drawedge[ELpos=25](A0,A2){$3$}
        \drawedge[curvedepth=-3](A0,A3){$4$}
        \drawedge[ELside=l](A3,A0){$1$}
        \drawedge[ELside=l](A1,A2){$3$}
        \drawedge[curvedepth=-3](A2,A1){$2$}
        \drawedge[ELpos=25,curvedepth=-1](A1,A3){$4$}
        \drawedge[ELpos=25](A3,A1){$2$}
        \drawedge[ELside=l](A2,A3){$4$}
        \drawedge[curvedepth=-3](A3,A2){$3$}
      \end{picture}
    \end{minipage}\hspace{1cm}
    \begin{minipage}{.5\textwidth}
     \unitlength=4pt
      \begin{picture}(50, 20)(0,0)
        \gasset{Nw=5,Nh=5,Nmr=2.5,curvedepth=0,loopdiam=2} \thinlines
        \node(A0)(0,8){$ad$} \node(A1)(20,8){$af$}
        \node(A2)(40,16){$bf$} \node(A3)(40,0){$ef$}
        \drawloop(A0){$\epsilon$} \drawloop(A1){$\epsilon$}
        \drawloop[loopangle=0](A2){$\epsilon$}
        \drawloop[loopangle=0](A3){$\epsilon$}
        \node[Nframe=n](B)(-10,20){$\ldots$}
        \node[Nframe=n](B0)(10,0){$\ldots$}
        \node[Nframe=n](B2)(50,10){$\ldots$}
        \node[Nframe=n](B3)(50,5){$\ldots$}
        \node[Nframe=n](B4)(10,20){$\ldots$} \drawedge(A0,A1){$af$}
        \drawedge[curvedepth=2](A1,A0){$ad$}
        \drawedge[curvedepth=-3](A1,A3){$ef$}
        \drawedge[curvedepth=4](A3,A1){$af$}
        \drawedge[curvedepth=4](A1,A2){$bf$}
        \drawedge[curvedepth=-3](A2,A1){$af$}
        \drawedge[curvedepth=2](A2,A3){$ef$} \drawedge(A3,A2){$bf$}
        \drawedge[dash={1.0}0](A0,B0){} \drawedge[dash={1.0}0](B,A0){}
        \drawedge[dash={1.0}0](A2,B2){}
        \drawedge[dash={1.0}0](A3,B3){}
        \drawedge[dash={1.0}0](B4,A1){}
      \end{picture}
    \end{minipage}
   \end{center}
   \caption{Semiautomata for the agent (left) and for a fragment of
     the environment (right).  In $A_1$, the states are the rooms and the transitions are labeled
with the rooms that the agent is to enter. For $A_2$, the states
represent the pairs of doors that are currently closed and a
transition $xy$ indicates the pair of doors $x,y$ are to be closed. \label{fig:SAs}}
\vspace*{-2ex}
\end{figure}
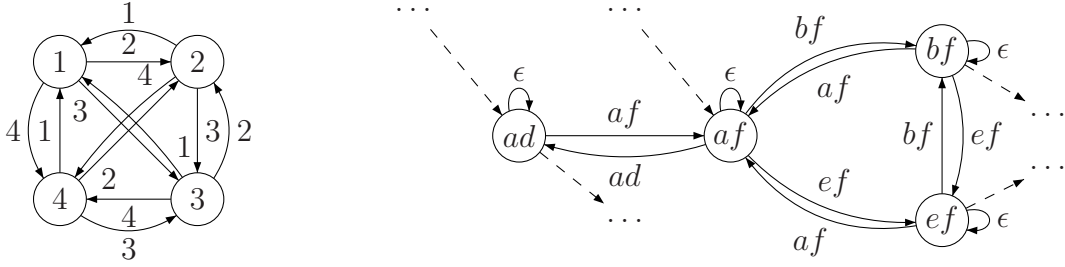

The goal of the agent in this example is to visit all four rooms (in
any order). Therefore, the specification can be described by the union of shuffle ideals of the permutations of $1234$. In this special case,
since the robot occupies one room when game starts, $\mathcal{A}_s=
\langle Q_s,\Sigma_s = \Sigma_1\cup \Sigma_2, T_s, I_s=\{1,2,3,4\},
F_s =\{1234\}\rangle$. A fragment of $\mathcal{A}_s$ is shown in Fig.~\ref{fig:taskfsa}.
\begin{figure}[h]
  \centering
  \begin{center}
    \unitlength=4pt
    \begin{picture}(60, 20)(10,0)
      \gasset{Nw=5,Nh=5,Nmr=2.5,loopdiam=3,curvedepth=0}
      \thinlines
      \node[Nmarks=i](A1)(15,10){1}
      \node(A13)(30,0){13}
      \node(A12)(30,12){12}
      \node(A134)(45,0){134}
      \node(A123)(45,7){123}
      \node(A124)(50,17){124}
      \node[Nframe=n](B1)(20,0){$\ldots$}
      \node[Nmarks=r,Nw=7](A1234)(60,7){1234}
      \drawedge(A1,A13){3}
      \drawedge(A1,A12){2}
      \drawedge[curvedepth=2](A12,A124){4}
      \drawedge(A12,A123){3}
      \drawedge(A13,A134){4}
      \drawedge(A13,A123){2}
      \drawedge(A124,A1234){3}
      \drawedge(A123,A1234){4}
      \drawedge(A134,A1234){2}
      \drawloop(A1){$x$,1}
      \drawloop(A12){$x$,1,2}
      \drawloop(A13){$x$,1,3}
      \drawloop(A123){$x$,1,2,3}
      \drawloop[loopangle=0](A134){$x$,1,3,4}
      \drawloop[loopangle=0](A124){$x$,1,2,4}
      \drawloop[loopangle=10](A1234){$x$,1,2,3,4}
      \drawedge[dash={1.0}0](A1,B1){}
    \end{picture}
  \end{center}
  \caption{Fragment of $\mathcal{A}_s = \langle Q_s,\Sigma_s =
    \Sigma_1\cup \Sigma_2, T_s, I_s=\{1,2,3,4\}, F_s
    =\{1234\}\rangle$, where $x =\Sigma_2$.}
  \label{fig:taskfsa}
\vspace*{-5ex}
\end{figure}
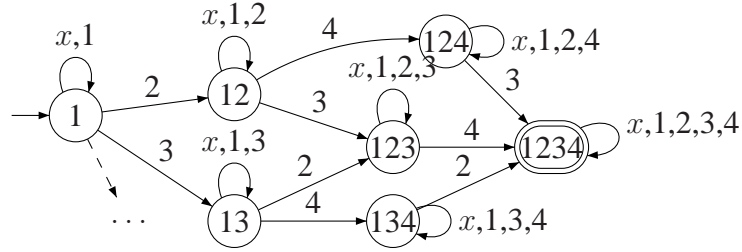

The interaction functions follow from obvious physical constraints: 
when the environment adversary closes a door, the agent cannot then move
through it.   The interaction function $U_2(d_1d_2,r)$ gives 
the set of rooms the agent cannot
access from room $r$ because doors $d_1$ and $d_2$ are closed.  
In Fig.~\ref{fig:testbed}, for instance, 
$U_2(ab,1)=\{2,3\}$.  In this example, the agent cannot enforce any constraints on
the adversary's behavior, so $U_1(q)=\emptyset,\forall q\in Q_1\times Q_2 $. 
Figure~\ref{fig:tbp} shows a fragment of $A_1\circ A_2$, while 
a fragment of the game automaton $\mathcal{G}$ is shown in Fig.~\ref{fig:gameboard}.

\begin{figure}[h]
  \centering
  \begin{center}
    \unitlength=4pt
    \begin{picture}(60, 25)(40,0)
      \gasset{Nw=12,Nh=4,Nmr=2.5,loopdiam=3,curvedepth=0}
      \node(A1)(40,16){$(1,ad,\bm1)$}
      \node(A2)(56,10){$(4,ad,\bm0)$}
      \node(A3)(56,20){$(3,ad,\bm0)$}
      \node(A4)(76,20){$(3,af,\bm1)$}
      \node(A5)(76,5){$(4,af,\bm1)$}
      \node(A7)(92,24){$(2,af,\bm0)$}
      \node(A8)(92,16){$(4,af,\bm0)$}
      \node(A9)(92,8){$(1,af,\bm0)$}
      \node(A10)(92,0){$(3,af,\bm0)$}
      \node[Nframe=n,Nw=4,Nh=4](B2)(70,0){$\ldots$}
      \node[Nframe=n,Nw=4,Nh=4](B3)(70,15){$\ldots$}
      \node[Nframe=n,Nw=4,Nh=4](B7)(104,21){$\ldots$}
      \node[Nframe=n,Nw=4,Nh=4](B8)(104,13){$\ldots$}
      \node[Nframe=n,Nw=4,Nh=4](B9)(104,5){$\ldots$}
      \node[Nframe=n,Nw=4,Nh=4](B10)(104,-3){$\ldots$}
      \node[Nframe=n,Nw=4,Nh=4](D7)(104,27){$\ldots$}
      \node[Nframe=n,Nw=4,Nh=4](D8)(104,19){$\ldots$}
      \node[Nframe=n,Nw=4,Nh=4](D9)(104,11){$\ldots$}
      \node[Nframe=n,Nw=4,Nh=4](D10)(104,3){$\ldots$}
      \node[Nframe=n,Nw=4,Nh=4](C2)(65,15){$\ldots$}
      \node[Nframe=n,Nw=4,Nh=4](C3)(49,25){$\ldots$}
      \node[Nframe=n,Nw=4,Nh=4](C4)(69,25){$\ldots$}
      \node[Nframe=n,Nw=4,Nh=4](C5)(77,15){$\ldots$}
      \drawedge(A1,A2){$4$}
      \drawedge(A1,A3){$3$}
      \drawedge(A2,A5){$af$}
      \drawedge(A3,A4){$af$}
      \drawedge(A4,A9){$4$}
      \drawedge(A4,A7){$1$}
      \drawedge(A4,A8){$2$}
      \drawedge(A5,A9){$1$}
      \drawedge(A5,A10){$3$}
      \drawedge[dash={1.0}0](A2,B2){}
      \drawedge[dash={1.0}0](A3,B3){}
      \drawedge[dash={1.0}0](C2,A2){}
      \drawedge[dash={1.0}0](C3,A3){}
      \drawedge[dash={1.0}0](C4,A4){}
      \drawedge[dash={1.0}0](C5,A5){}
      \drawedge[dash={1.0}0](A7,B7){}
      \drawedge[dash={1.0}0](A8,B8){}
      \drawedge[dash={1.0}0](A9,B9){}
      \drawedge[dash={1.0}0](A10,B10){}
      \drawedge[dash={1.0}0](D7,A7){}
      \drawedge[dash={1.0}0](D8,A8){}
      \drawedge[dash={1.0}0](D9,A9){}
      \drawedge[dash={1.0}0](D10,A10){}
    \end{picture}
  \end{center}
  \caption{Fragment of turn-based product $P=A_1\circ A_2 = \langle Q_p,\Sigma_1\cup\Sigma_2,T_p\rangle$. 
State $(r,d_1d_2,c)$ means the agent is in room $r$, doors
$\{d_1,d_2\}$ are closed and the Boolean variable keeping track of whose turn it
is set to $c$.
\label{fig:tbp}}
\end{figure}
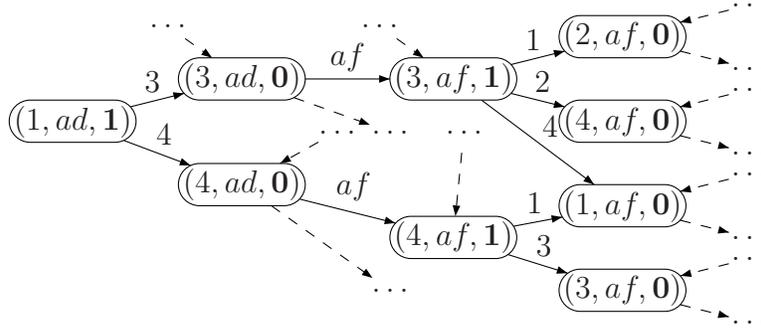

\begin{figure}[h!]
  \begin{center}
    \unitlength=4pt
    \vspace{-0.7in}
    \begin{picture}(60, 45)(0,0)
      \gasset{Nw=10,Nh=3,Nmr=2.5,loopdiam=3,curvedepth=0}
      \node[Nmarks=i](A1)(0,16){$\scriptstyle 1,ad,\bm{1},1$}
      \node(A2)(18,10){$\scriptstyle 4,ad,\bm{0},14$}
      \node(A3)(18,20){$\scriptstyle 3,ad,\bm{0},13$}
      \node(A4)(36,25){$\scriptstyle 3,af,\bm{1},13$}
      \node(A5)(36,5){$\scriptstyle 4,af,\bm{1},14$}
      \node(A6)(54,32){$\scriptstyle 1,af,\bm{0},13$}
      \node[Nw=17](A7)(54,24){$\scriptstyle 2,af,\bm{0},123$}
      \node[Nw=17](A8)(54,16){$\scriptstyle 4,af,\bm{0},124$}
      \node[Nw=17](A9)(54,8){$\scriptstyle 1,af,\bm{0},14$}
      \node[Nw=17](A10)(54,0){$\scriptstyle 3,af,\bm{0},134$}
      \node[Nframe=n,Nw=4,Nh=4](B2)(27,5){$\ldots$}
      \node[Nframe=n,Nw=4,Nh=4](B3)(27,15){$\ldots$}
      \node[Nframe=n,Nw=4,Nh=4](B6)(67,29){$\ldots$}
      \node[Nframe=n,Nw=4,Nh=4](B7)(67,21){$\ldots$}
      \node[Nframe=n,Nw=4,Nh=4](B8)(67,13){$\ldots$}
      \node[Nframe=n,Nw=4,Nh=4](B9)(67,5){$\ldots$}
      \node[Nframe=n,Nw=4,Nh=4](B10)(67,-3){$\ldots$}
      \node[Nframe=n,Nw=4,Nh=4](D6)(67,35){$\ldots$}
      \node[Nframe=n,Nw=4,Nh=4](D7)(67,27){$\ldots$}
      \node[Nframe=n,Nw=4,Nh=4](D8)(67,19){$\ldots$}
      \node[Nframe=n,Nw=4,Nh=4](D9)(67,11){$\ldots$}
      \node[Nframe=n,Nw=4,Nh=4](D10)(67,3){$\ldots$}
      \node[Nframe=n,Nw=4,Nh=4](C2)(17,15){$\ldots$}
      \node[Nframe=n,Nw=4,Nh=4](C3)(17,25){$\ldots$}
      \node[Nframe=n,Nw=4,Nh=4](C4)(37,30){$\ldots$}
      \node[Nframe=n,Nw=4,Nh=4](C5)(37,10){$\ldots$}
      \drawedge(A1,A2){$\scriptstyle 4$}
      \drawedge(A1,A3){$\scriptstyle 3$}
      \drawedge[ELpos=60](A2,A5){$\scriptstyle af$}
      \drawedge[ELpos=40](A3,A4){$\scriptstyle af$}
      \drawedge(A4,A6){$\scriptstyle 1$}
      \drawedge(A4,A7){$\scriptstyle 2$}
      \drawedge(A4,A8){$\scriptstyle 4$}
      \drawedge(A5,A9){$\scriptstyle 1$}
      \drawedge(A5,A10){$\scriptstyle 3$}
      \drawedge[dash={1.0}0](A2,B2){}
      \drawedge[dash={1.0}0](A3,B3){}
      \drawedge[dash={1.0}0](A6,B6){}
      \drawedge[dash={1.0}0](C2,A2){}
      \drawedge[dash={1.0}0](C3,A3){}
      \drawedge[dash={1.0}0](C4,A4){}
      \drawedge[dash={1.0}0](C5,A5){}
      \drawedge[dash={1.0}0](A6,B6){}
      \drawedge[dash={1.0}0](A7,B7){}
      \drawedge[dash={1.0}0](A8,B8){}
      \drawedge[dash={1.0}0](A9,B9){}
      \drawedge[dash={1.0}0](A10,B10){}
      \drawedge[dash={1.0}0](D6,A6){}
      \drawedge[dash={1.0}0](D7,A7){}
      \drawedge[dash={1.0}0](D8,A8){}
      \drawedge[dash={1.0}0](D9,A9){}
      \drawedge[dash={1.0}0](D10,A10){}
    \end{picture}
  \end{center}
  \caption{Fragment of the game automaton $\mathcal{G} =\langle Q,
    \Sigma_1\cup\Sigma_2, T,Q_0,F\rangle $ for the door-robot game,
    where $Q_0=\left\{(q_1,q_2,\bm 1, q_s)\mid q_1\in I_1, q_2\in I_2,
      q_s=q_1\in \{1,2,3,4\}\right\}$ and $F =\left\{(q_1,q_2,\bm 0,
      1234)\mid (q_1,q_2,\bm 0)\in Q_p\right\}$, note that upon
    initialization of a game, the state of $A_1$ (the room occupied by
    the robot) determines the choice of initial state in
    $\mathcal{A}_s$ (the room visited by the robot.)
    \label{fig:gameboard}}
\vspace*{-5ex}
\end{figure}
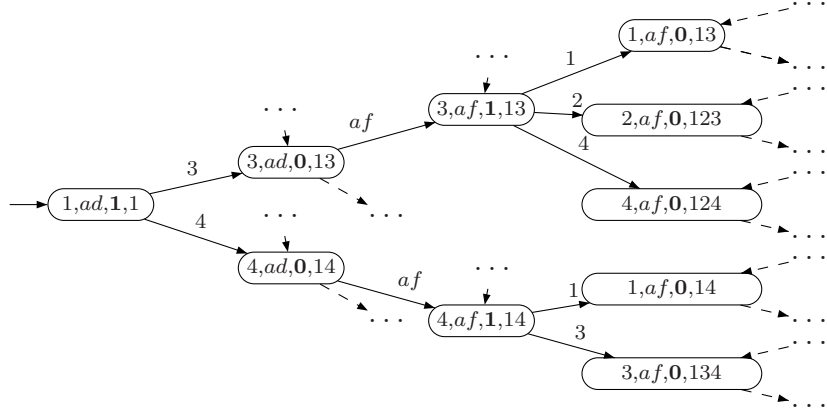

Let us show how Proposition~\ref{prop:optimal-moves} applies to this
case study.  The winning set of states is $F= \{\big((q_1,q_2,\bm
0),1234\big)\in Q\mid (q_1,q_2,\bm 0)\in Q_p \}$; $\mathsf{Attr}(F)$ is
obtained by computing the fixed-point of \eqref{eq:attractor}. Due to
space limitations, we  only give a winning path for the robot
according to the winning strategy $\mathsf{WS}^*_1$ with the initial
setting of the game in $Q_0$.

If the agent were to have complete knowledge of the game
automaton, it could compute the set of initial states from which
it has a winning strategy:
\[
Q_0 \cap
\mathsf{Attr}(F)= \big\{ (1,ad,\bm{1},1), (1,ce,\bm{1},1),
(2,ad,\bm{1},2), (2,bf,\bm{1},2), (4,ce,\bm{1},4),
(4,bf,\bm{1},4) \big\}. 
\]  

Hence, with complete game information, the robot can win the game
starting from initial conditions in $Q_0\cap \mathsf{Attr}(F)$; note that
$\frac{|Q_0\cap \mathsf{Attr}(F)|}{|Q_0|}$ makes up a mere
$25\%$ of all possible initial configurations. For instance, the
agent has no winning strategy if it starts in room
$3$.\footnote{Although the construction assumes the first move of the
  robot is to select a room to occupy (because it begins in state 0),
  we assume the game begins after the robot has been placed and the
  closed doors have been selected.}

For the sake of argument, take $q_0 = (1,ad,\bm{1},1) \in \mathsf{Attr}(F)
\cap Q_0$. Since the rank of $q_0$ is $\rho(q_0)=7$,  following $\mathsf{WS}^*_1$ of \eqref{eq:shortestws} the robot's fastest winning play is
\begin{multline*}
(1,ad,\bm{1},1)
\stackrel{4}{\rightarrow}
(4,ad,\bm{0},14)
\stackrel{ae}{\rightarrow}
(4,ae,\bm{1},14) 
\stackrel{2}{\rightarrow}
(2,ae,\bm{0},124)
\stackrel{ce}{\rightarrow}\\
(2,ce,\bm{1},124)
\stackrel{1}{\rightarrow}
(1,ce,\bm{0},124)
\stackrel{ef}{\rightarrow}
(1,ef,\bm{1},124)
\stackrel{3}{\rightarrow} 
(3,ef,\bm{0},1234)
\enspace.
\end{multline*}
The adversary's moves, $ae$, $ce$ and $ef$, are selected such that it
can slow down the process of winning of the robot as much as possible;
there is no move the environment can make to prevent the agent from
winning since the initial state is in the agent's attractor and the
agent has full knowledge of the game.  Note that in the cases where
the game rules are described by \texttt{Adjacent} and \texttt{General}
regimes (see Table~\ref{tab:rules}), the robot cannot win no matter
which initial state is in because in both cases $\mathsf{Attr}(F) \cap
Q_0 = \emptyset$. In these game automata, the agent, even with perfect
knowledge of the behavior of the environment, can never win.

Let us show how a robot, which has no prior knowledge of the game rules
but is equipped with a \ac{gim}, can start winning the game after a
point when it has observed enough to construct a correct
model of its environment.  As the first game starts, the agent
realizes that the environment is not static, but is rather expressed
by some (discrete) dynamical system, a \ac{sa} $A_2$.  It assumes
(rightfully so in this case) that the language admissible in $A_2$ is
strictly 2-local. With these knowledge, the robot's initial hypothesis
of the environment $A_2^{(0)} = \left(\langle Q_2,\Sigma_2,T_2\rangle,
  \mbox{sw}^{(0)}\right)$ is formulated in two
steps: \begin{inparaenum}[(i)]
\item obtain the $\text{\ac{sl}}_2$-\ac{fsa} for $
 \left\{\Sigma_2\setminus \{\epsilon\}\right\}^\ast$ and assign
  $\mbox{sw}^{(0)}(q,\sigma)= 1,\forall \sigma \in \Sigma_2\setminus
  \{\epsilon\}$;
\item add self-loops
  $T_2(q,\epsilon) = q$ and let $\mathrm{sw}^{(0)}(q,\epsilon)=1$ , $\forall q \in
  Q_2$.
\end{inparaenum}

In every round, the agent does the best it can: it takes the action suggested by the strategy
$\mathsf{WS}_1^\ast$ constructed based on its its current theory of mind. 
Each time it observes a new action on the part of its adversary, it 
updates its theory of mind using \eqref{eq:updatesw}, recomputes 
$\mathsf{WS}_1^\ast$ using \eqref{eq:shortestws}, 
and applies the new strategy in the following round.  The agent may realize that 
it has lost the game if it finds its current state out of the attractor computed based
on its most recent theory of mind.  In this case, the agent resigns and starts a new
game from a random initial condition, keeping the model for the environment it
has built so far and improving it as it goes.
We set an upper limit to the number of games by
restricting the total number of turns played to be less than $n$.

The following simplified algorithm illustrates
the procedure.  
\begin{enumerate}
\item \label{initial} Let $i=0$, the game hypothesis is
$\mathcal{G}^{(0)}$.  The game starts with a random 
  $q_0 \in Q_0$.
\item \label{computemove} At the current state $q
  =(q_1,q_2,\bm{1},q_s)$, if the number of turns exceeds the upper
  limit $n$, the sequence of repeated games is terminated. Otherwise,
  the robot computes $\mathsf{Attr}(F)$ based on $\mathcal{G}^{(i)}$
  (note that it is not necessary to compute $\mathsf{Attr}(F)$ and
  $\mathsf{WS}_1^\ast(q)$ as long as there is no update in
  $\mathcal{G}^{(i)}$ from the previous round.)  Then, according to
  $\mathsf{Attr}(F)$ and \eqref{eq:termination}, the robot either
  makes a move $\sigma \in \mathsf{WS}_1^\ast(q)$ or resigns.  If a
  move is made and $T(q,\sigma)\in F$, the robot wins. In the case of
  either winning or resigning the game, the robot restarts the game at
  some $q_0\in Q_0$ with a theory of mind $A_2^{(i)}$ and a
  hypothesized game automaton $\mathcal{G}^{(i)}$; then its control
  goes to Step~\ref{computemove}. Otherwise, it goes to Step
  \ref{update}.
\item \label{update} The adversary takes some action. The robot observes this action
  and determines whether to switch on a blocked transition. If a new
  transition in $A_2^{(i)}$ is observed, it updates $A_2^{(i)}$ to
  $A_2^{(i+1)}$.  Then $\mathcal{G}^{(i)}$ is updated to
  $\mathcal{G}^{(i+1)}$ according to \eqref{eq:extendsw}. Otherwise,
  $A_2^{(i+1)}= A_2^{(i)}$ and
  $\mathcal{G}^{(i+1)}=\mathcal{G}^{(i)}$.  The robot sets $i=i+1$
  and goes to Step \ref{computemove}.
\end{enumerate}

We can measure the efficiency of the learning algorithm by computing 
the ratio between transitions that are switched on during the game sequence versus
the total number of enabled transitions in the true game automaton.
The convergence of learning is shown in 
Fig.~\ref{fig:convergeresult} and the results show that after 125 turns
including both robot's and environment's turns (approximately 42
games), the robot's model of the environment converges to the actual one.
\begin{figure}[h!]
  \centering
\subfigure[\small The convergence of learning algorithm.  The 
	figure shows the ratio of adversary transitions that
       have been identified by the agent versus 
       the number of turns the two players have played.  In just
       125 turns the hybrid agent has full
       knowledge of its adversary's
       dynamics.]{
      \label{fig:convergeresult}
    \includegraphics[width=3in]{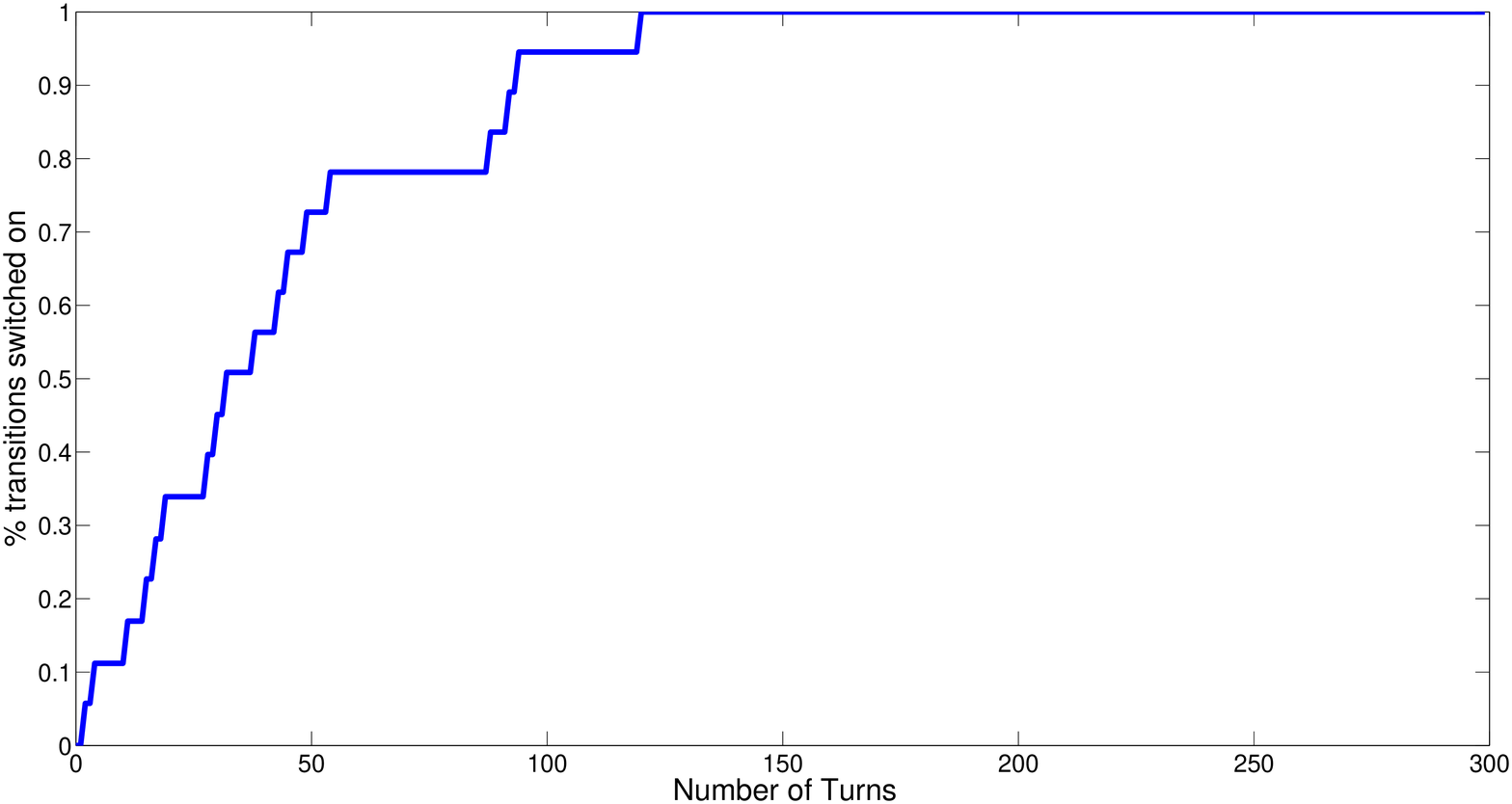}
    }
\subfigure[\small Comparison results with three
      types of the robot. For the case of ``no learning,'' the robot eventually 
     moves out of its attractor and gets
     trapped.]{
     \label{tbl:comparison}
 \begin{tabular}{ l  c c  c c }
 \toprule
   {} & \small Num of games & \small Num of wins \\
   \midrule
\small    No learning & \small 300 & \small 0 \\
\small    With learning  & \small 300 & \small 79 \\
\small    Full knowledge & \small 300 & \small 82\\
    \bottomrule
    \end{tabular}
}
\vspace*{-5ex}
\end{figure}

Table~\ref{tbl:comparison} gives outcomes of repeated games in three
different scenarios for the robot: \begin{inparaenum}[(a)] \item
  Full-Knowledge: the robot knows exactly the model of the
  environment; \item No Learning: the robot has no knowledge of, and
  no way of identifying the environment dynamics, and \item Learning:
  the robot starts without prior knowledge of environment dynamics but
  utilizes a \ac{gim}.
  \end{inparaenum}
The initial conditions for the game are chosen randomly.
In the absence of prior information about the environment
dynamics, and without any process for identifying it, the robot cannot win: 
in 300 games, it scores no victories.  If it had full knowledge
of this dynamics, it would have been able to win 82 out of the 300
times it played the game, a percentage of $27\%$, which is close to
the theoretical value of $25\%$.  A robot starting with no prior
knowledge but uses its \ac{gim} performs just as well (reaching a win
ratio of $26\%$) as one with full knowledge.  In
fact, as Fig.~\ref{fig:convergeresult} suggests, the robot has
recovered the performance of an ``all-knowing'' agent in less than
15\% ($\frac{42}{300}$) of the number of games played repetitively
used in Table~\ref{tbl:comparison}. We demonstrate the planning and
control of the robot using KiKS 
simulation environment in
Matlab{\texttrademark}.\footnote{A simulation video is available at
\url{http://research.me.udel.edu/~btanner/Project_figs/newgame.mp4}.}


\section{Discussion and Conclusions}
\label{section:conclusion}
This paper shows how the use of grammatical inference in robotic
planning and control allows an agent to perform a task in an unknown
and adversarial environment.  Within a game-theoretic framework, it is
shown that an agent can start from an incomplete model of its
environment and iteratively update that model via a string extension
learner applied to the language of its adversary's turns in the game,
to ultimately converge on the correct model.  Its success is
guaranteed provided that the language being learned is in the class of
languages that can be inferred from a positive presentation and the
characteristic sample can be observed.  This method leads to more
effective planning, since the agent will win the game if it is
possible for it to do so.  Our primary contribution is thus a
demonstration of how grammatical inference and game theory can be
incorporated in symbolic planning and control of a class of hybrid systems
with convergent closed loop continuous dynamics.


The architecture (framework) we propose is universal and can be seen
as being composed of two distinct blocks: Control synthesis and
Learning. The contents of these blocks can vary according to the task
in consideration and the target model to be learned. The current task
is a reachability problem, and hence we utilize algorithms for
computing a winning strategy in reachability games to synthesize
symbolic controllers.  However, there is nothing inherent in the
architecture that prevents synthesis of the control using winning
strategies of other types of games, such as  B\"uchi
games\cite{Mazala2001,Chatterjee2006}. Similarly, as in this paper the
rules of the environment are encoded in strictly $k$-local grammar,
the learning module operates on string extension languages. However,
any language that is identifiable from positive presentation can be
considered.  The main difference compared to our learning module
and other machine learning methods---such as reinforcement learning
and Bayesian inference---is that we take advantage of prior knowledge
about the structure of the hypothesis space.  This assumption enables
the development of faster and more efficient learning algorithms.

\bibliographystyle{IEEEtran}      
\bibliography{refers}

\begin{thebibliography}{10}
\providecommand{\url}[1]{#1}
\csname url@samestyle\endcsname
\providecommand{\newblock}{\relax}
\providecommand{\bibinfo}[2]{#2}
\providecommand{\BIBentrySTDinterwordspacing}{\spaceskip=0pt\relax}
\providecommand{\BIBentryALTinterwordstretchfactor}{4}
\providecommand{\BIBentryALTinterwordspacing}{\spaceskip=\fontdimen2\font plus
\BIBentryALTinterwordstretchfactor\fontdimen3\font minus
  \fontdimen4\font\relax}
\providecommand{\BIBforeignlanguage}[2]{{%
\expandafter\ifx\csname l@#1\endcsname\relax
\typeout{** WARNING: IEEEtran.bst: No hyphenation pattern has been}%
\typeout{** loaded for the language `#1'. Using the pattern for}%
\typeout{** the default language instead.}%
\else
\language=\csname l@#1\endcsname
\fi
#2}}
\providecommand{\BIBdecl}{\relax}
\BIBdecl

\bibitem{Tanner2012}
H.~Tanner, J.~Fu, C.~Rawal, J.~Piovesan, and C.~Abdallah, ``Finite abstractions
  for hybrid systems with stable continuous dynamics,'' \emph{Discrete Event
  Dynamic Systems}, vol.~22, pp. 83--99, 2012.

\bibitem{Fainekos2009}
G.~E. Fainekos, A.~Girard, H.~Kress-Gazit, and G.~J. Pappas, ``Temporal logic
  motion planning for dynamic robots,'' \emph{Automatica}, vol.~45, no.~2, pp.
  343--352, Feb. 2009.

\bibitem{delaHiguera2010}
C.~de~la Higuera, \emph{Grammatical Inference: Learning Automata and
  Grammars}.\hskip 1em plus 0.5em minus 0.4em\relax Cambridge University Press,
  2010.

\bibitem{Zielonka1998135}
W.~Zielonka, ``Infinite games on finitely coloured graphs with applications to
  automata on infinite trees,'' \emph{Theoretical Computer Science}, vol. 200,
  no. 1{--}2, pp. 135--183, 1998.

\bibitem{Ramadge1987}
P.~J. Ramadge and W.~M. Wonham, ``{Supervisory Control of a Class of Discrete
  Event Processes},'' \emph{SIAM Journal on Control and Optimization}, vol.~25,
  no.~1, pp. 206--230, 1987.

\bibitem{game-Julia97}
J.~Knight and B.~Luense, ``Control theory, modal logic, and games,'' in
  \emph{Hybrid Systems: Computation and Control}, ser. Lecture Notes in
  Computer Science, P.~Antsaklis, W.~Kohn, A.~Nerode, and S.~Sastry, Eds.\hskip
  1em plus 0.5em minus 0.4em\relax Springer Berlin / Heidelberg, 1997, vol.
  1273, pp. 160--173.

\bibitem{Gradel2002}
E.~Gr\"{a}del, W.~Thomas, and T.~Wilke, Eds., \emph{Automata logics, and
  infinite games: a guide to current research}.\hskip 1em plus 0.5em minus
  0.4em\relax New York, NY, USA: Springer-Verlag New York, Inc., 2002.

\bibitem{Piterman06synthesisof}
N.~Piterman and A.~Pnueli, ``Synthesis of reactive(1) designs,'' in \emph{In
  Proceedings of Verification, Model Checking, and Abstract
  Interpretation}.\hskip 1em plus 0.5em minus 0.4em\relax Springer, 2006, pp.
  364--380.

\bibitem{Belta2007}
C.~Belta, A.~Bicchi, M.~Egerstedt, E.~Frazzoli, E.~Klavins, and G.~Pappas,
  ``Symbolic planning and control of robot motion,'' \emph{IEEE Robotics
  Automation Magazine}, vol.~14, no.~1, pp. 61--70, 2007.

\bibitem{Lahijanian2010}
M.~Lahijanian, J.~Wasniewski, S.~Andersson, and C.~Belta, ``Motion planning and
  control from temporal logic specifications with probabilistic satisfaction
  guarantees,'' in \emph{IEEE International Conference on Robotics and
  Automation}, 2010, pp. 3227--3232.

\bibitem{Laviers11}
A.~LaViers, M.~Egerstedt, Y.~Chen, and C.~Belta, ``Automatic generation of
  balletic motions,'' in \emph{Proceedings of the 2011 IEEE/ACM Second
  International Conference on Cyber-Physical Systems}, Washington, DC, USA,
  2011, pp. 13--21.

\bibitem{Wongpiromsarn2010}
T.~Wongpiromsarn, U.~Topcu, and R.~M. Murray, ``Receding horizon control for
  temporal logic specifications,'' in \emph{Hybrid Systems: Computation and
  Control}, K.~H. Johansson and W.~Yi, Eds.\hskip 1em plus 0.5em minus
  0.4em\relax New York, NY, USA: ACM, 2010, pp. 101--110.

\bibitem{Tomlin2003}
C.~J. Tomlin, I.~Mitchell, A.~M. Bayen, and M.~Oishi, ``Computational
  techniques for the verification of hybrid systems,'' \emph{Proceedings of the
  IEEE}, vol.~91, no.~7, pp. 986--1001, july 2003.

\bibitem{Kloetzer2008}
M.~Kloetzer and C.~Belta, ``Dealing with nondeterminism in symbolic control,''
  in \emph{Hybrid Systems: Computation and Control}, ser. Lecture Notes in
  Computer Science, M.~Egerstedt and B.~Mishra, Eds.\hskip 1em plus 0.5em minus
  0.4em\relax Springer, 2008, pp. 287--300.

\bibitem{Hadas2009}
H.~Kress-Gazit, G.~E. Fainekos, and G.~J. Pappas, ``Temporal-logic-based
  reactive mission and motion planning,'' \emph{IEEE Transactions on Robotics},
  vol.~25, no.~6, pp. 1370--1381, dec. 2009.

\bibitem{Hadas2011}
H.~Kress-Gazit, T.~Wongpiromsarn, and U.~Topcu, ``Correct, reactive, high-level
  robot control,'' \emph{Robotics Automation Magazine, IEEE}, vol.~18, no.~3,
  pp. 65 --74, sept. 2011.

\bibitem{Astrom}
K.~J. Astrom and B.~Wittenmark, \emph{Adaptive Control}.\hskip 1em plus 0.5em
  minus 0.4em\relax Addison-Wesley, 1995.

\bibitem{sastry-adaptive}
S.~Sastry and M.~Bodson, \emph{Adaptive Control}.\hskip 1em plus 0.5em minus
  0.4em\relax Prentice Hall, 1989.

\bibitem{maja97}
M.~J. Matari\'{c}, ``Reinforcement learning in the multi-robot domain,''
  \emph{Autonomous Robots}, vol.~4, no.~1, pp. 73--83, 1997.

\bibitem{schaal03}
J.~Peters, S.~Vijayakumar, and S.~Schaal, ``Reinforcement learning for humanoid
  robotics,'' in \emph{Proceedings of the IEEE-RAS International Conference on
  Humanoid Robots}, 2003.

\bibitem{roy09}
E.~Brunskill, B.~R. Leffler, L.~Li, M.~L. Littman, and N.~Roy, ``Provably
  efficient learning with typed parametric models,'' \emph{Journal of Machine
  Learning Research}, vol.~10, pp. 1955--1988, 2009.

\bibitem{abbeel10}
P.~Abbeel, A.~Coates, and A.~Y. Ng, ``Autonomous helicopter aerobatics through
  apprenticeship learning,'' \emph{International Journal of Robotics Research},
  vol.~29, no.~13, pp. 1608--1639, 2010.

\bibitem{Hamdi-Cherif2009}
A.~Hamdi-Cherif and C.~Kara-Mohammed, ``Grammatical inference methodology for
  control systems,'' \emph{WSEAS Transaction on Computers}, vol.~8, no.~4, pp.
  610--619, Apr. 2009.

\bibitem{Yushan2012}
C.~B. Yushan~Chen, Jana~Tumova, ``{LTL} robot motion control based on automata
  learning of environmental dynamics,'' in \emph{IEEE International Conference
  on Robotics and Automation}, Saint Paul, MN, USA, 2012.

\bibitem{gold67}
E.~M. Gold, ``Language identification in the limit,'' \emph{Information and
  Control}, vol.~10, no.~5, pp. 447--474, 1967.

\bibitem{Heinz-2010-SEL}
J.~Heinz, ``String extension learning,'' in \emph{Proceedings of the 48th
  Annual Meeting of the Association for Computational Linguistics}, Uppsala,
  Sweden, July 2010, pp. 897--906.

\bibitem{Koetzing2010}
A.~Kasprzik and T.~K{\"o}tzing, ``String extension learning using lattices,''
  in \emph{Language and Automata Theory and Applications: 4th International
  Conference, LATA 2010}, ser. Lecture Notes in Computer Science,
  C.~Martin-Vide, H.~Fernau, and A.~H. Dediu, Eds., vol. 6031.\hskip 1em plus
  0.5em minus 0.4em\relax Trier, Germany: Springer, 2010, pp. 380--391.

\bibitem{McNaughtonPapert1971}
R.~McNaughton and S.~Papert, \emph{Counter-Free Automata}.\hskip 1em plus 0.5em
  minus 0.4em\relax MIT Press, 1971.

\bibitem{Luca1980}
A.~De~Luca and A.~Restivo, ``{A characterization of strictly locally testable
  languages and its application to subsemigroups of a free semigroup},''
  \emph{Information and Control}, vol.~44, no.~3, pp. 300--319, Mar. 1980.

\bibitem{GarciaEtAl1990}
P.~Garcia, E.~Vidal, and J.~Oncina, ``Learning locally testable languages in
  the strict sense,'' in \emph{Proceedings of the Workshop on Algorithmic
  Learning Theory}, 1990, pp. 325--338.

\bibitem{RogersTalk}
J.~Rogers, ``Cognitive complexity in the sub-regular realm,'' UCLA Colloquium,
  Oct. 2010.

\bibitem{LygerosTAC}
J.~Lygeros, K.~Johansson, S.~Simi\'{c}, and S.~Sastry, ``Dynamical properties
  of hybrid automata,'' \emph{IEEE Transactions on Automatic Control}, vol.~48,
  no.~1, pp. 2--17, 2003.

\bibitem{enderton}
H.~Enderton, \emph{A Mathematical Introduction to Logic}.\hskip 1em plus 0.5em
  minus 0.4em\relax Academic Press, 1972.

\bibitem{khalil}
H.~Khalil, \emph{Nonlinear Systems}, 3rd~ed.\hskip 1em plus 0.5em minus
  0.4em\relax Prentice Hall, 2002.

\bibitem{Faron96}
C.~Stirling, ``Modal and temporal logics for processes,'' in \emph{Logics for
  concurency: structure vs automata}, F.~Moller and G.~Birtwistle, Eds.\hskip
  1em plus 0.5em minus 0.4em\relax Springer, 1996.

\bibitem{clarke}
E.~M. {Clarke Jr.}, O.~Grumberg, and D.~A. Peled, \emph{Model checking}.\hskip
  1em plus 0.5em minus 0.4em\relax MIT Press, 1999.

\bibitem{Pin}
D.~Perrin and J.~\'{E}ric Pin, \emph{Infinite words:automata, semigroups, logic
  and games}.\hskip 1em plus 0.5em minus 0.4em\relax Elsevier, 2004.

\bibitem{Thomas2002}
W.~Thomas, ``Infinite games and verification (extended abstract of a
  tutorial),'' in \emph{Proceedings of the 14th International Conference on
  Computer Aided Verification}, ser. CAV {'}02.\hskip 1em plus 0.5em minus
  0.4em\relax London, UK, UK: Springer-Verlag, 2002, pp. 58--64.

\bibitem{FrithFrith-2003}
U.~Frith and C.~Frith, ``Development and neurophysiology of mentalizing,''
  \emph{Philosophical Transactions of the Royal Society B Biological Sciences},
  no. 358, pp. 459--473, 2003.

\bibitem{PremackWoodruff-1978}
D.~Premack and G.~Woodruff, ``Does the chimpanzee have a theory of mind?''
  \emph{Behavioral and Brain Sciences}, vol.~1, no.~04, pp. 515--526, 1978.

\bibitem{Caron1998}
P.~Caron, ``{LANGAGE:} a maple package for automaton characterization of
  regular languages,'' in \emph{Automata Implementation}, ser. Lecture Notes in
  Computer Science, D.~Wood and S.~Yu, Eds.\hskip 1em plus 0.5em minus
  0.4em\relax Springer, 1998, vol. 1436, pp. 46--55.

\bibitem{Heinz-thesis}
J.~Heinz, ``Inductive learning of phonotactic patterns,'' Ph.D. dissertation,
  University of California, Los Angeles, 2007.

\bibitem{acc12}
J.~Fu and H.~G. Tanner, ``Optimal planning on register automata,'' in
  \emph{American Control Conference}, Jun 2012 (to appear).

\bibitem{vijay-korea}
A.~K. Das, R.~Fierro, V.~Kumar, B.~Southall, J.~Spletzer, and C.~J. Taylor,
  ``Real-time vision-based control of a nonholonomic mobile robot,'' in
  \emph{Proceedings of the IEEE International Conference on Robotics and
  Automation}, 2001, pp. 1714--1719.

\bibitem{conner}
D.~C. Conner, H.~Choset, and A.~A. Rizzi, ``Flow-through policies for hybrid
  controller synthesis applied to fully actuated systems,'' \emph{IEEE
  Transactions on Robotics}, vol.~25, no.~1, pp. 136--146, 2009.

\bibitem{waldo}
H.~Kress-Gazit, G.~E. Fainekos, and G.~J. Pappas, ``Where's {Waldo}?
  sensor-based temporal logic motion planning,'' in \emph{Proceedings of the
  IEEE International Conference on Robotics and Automation}, 2007, pp.
  3116--3121.

\bibitem{conner-phd}
D.~C. Conner, ``Integrating planning and control for constrained dynamical
  systems,'' Ph.D. dissertation, Carnegie Mellon University, December 2007.

\bibitem{Mazala2001}
R.~Mazala, ``Infinite games,'' in \emph{Automata, Logics, and Infinite Games},
  2001, pp. 23--42.

\bibitem{Chatterjee2006}
K.~Chatterjee, T.~Henzinger, and N.~Piterman, ``Algorithms for {B\"u}chi
  games,'' in \emph{Games in Design and Verification}, 2006.

\end{thebibliography}

\end{document}